\DeclareMathOperator*{\argmin}{arg\,min}
\newtheorem{lemma}{Lemma}
\newtheorem{corollary}{Corollary}
\newtheorem{theorem}{Theorem}
\journal{Neurocomputing}
\begin{document}

\begin{frontmatter}

 \title{Adaptive Hedging under Delayed Feedback}
\author{Alexander Korotin}
\ead{a.korotin@skoltech.ru}
\author{Vladimir V'yugin}
\ead{v.vuygin@skoltech.ru}
\author{Evgeny Burnaev}
\ead{e.burnaev@skoltech.ru}
\address{Skolkovo Institute of Science and Technology}

\begin{abstract}
The  article  is  devoted to investigating the application of hedging strategies to online expert weight allocation under delayed feedback. As the main result we develop the General Hedging algorithm $\mathcal{G}$ based on the exponential reweighing of experts' losses. We build the artificial probabilistic framework and use it to prove the adversarial loss bounds for the algorithm $\mathcal{G}$ in the delayed feedback setting. The designed algorithm $\mathcal{G}$ can be applied to both countable and continuous sets of experts. We also show how algorithm $\mathcal{G}$ extends classical Hedge (Multiplicative Weights) and adaptive Fixed Share algorithms to the delayed feedback and derive their regret bounds for the delayed setting by using our main result.
\end{abstract}

\begin{keyword}
hedging \sep decision-theoretic online learning \sep experts problem \sep delayed feedback \sep adaptive algorithms \sep non-replicating algorithms \sep adversarial setting.
\end{keyword}

\end{frontmatter}

\section{Introduction}

We consider the \textbf{Decision-Theoretic Online Learning} (DTOL) framework \cite{LiW94, FrS97,devroye2013prediction,cesa2007improved,kotlowski2018minimaxity,devroye2013prediction} which is closely related to the paradigm of
\textbf{prediction with expert advice} \cite{cesa-bianchi,Vov90,VoV98,Vovk1999,LiW94,adamskiy2012putting,BoW2002,KorBurAggrBase2018}. A \textbf{master algorithm} at every step ${t=1,\dots,T}$ of the game has to choose the weight allocation for a given pool of expert strategies (experts). We call this problem the \textbf{experts problem}. We investigate the \textbf{adversarial case}, i.e., no assumptions are made about the nature of the data (stochastic, deterministic, etc.).

The performance of the master algorithm is measured by the \textbf{regret} over the entire game. The regret $R_T$ is the difference between the cumulative loss of the online algorithm and the loss of some given comparator. A typical comparator is the best fixed expert in the pool or the best fixed convex linear combination of experts. The goal of the algorithm is to minimize the regret, i.e., $R_{T}\rightarrow \min$. 

In the classical online learning, the algorithm suffers loss of its decision at each step $t$ at the end of the same step the decision is made. In contrast to the classical scenario, we consider the \textbf{delayed feedback} learning. At each step $t$ of the game the algorithm makes a decision, and its result will be revealed only at the end of a time point $t+D_{t}$ (where $D_{t}\geq 0$ is some \textbf{delay}).

It turns out that there exists a wide range of algorithms for the non-delayed scenario ($D_{t}\equiv 0$). Almost all of them exploit the \textbf{follow-the-best-expert idea}: the better expert performed in the past, the higher relative weight is assigned to the expert. The pure \textbf{Follow the Leader} (FTL) strategy is well-known to have good performance in the stochastic setting\footnote{Pure Follow the Leader strategy is known to be the minimax in the simplest stochastic setting (experts' losses are i.i.d. between experts and time steps).}  \cite{kotlowski2018minimaxity}, but it may be inefficient when the data is generated by an adversary (see discussion in \cite{shalev2012online,de2014follow}).

\textbf{Follow the Perturbed Leader} (FTPL) algorithm \cite{kalai2005efficient} adds random noise to expert evaluation process. This prevents overfitting in adversarial setting. For example, exponential \cite{kalai2005efficient}, random-walk \cite{devroye2013prediction}, and dropout \cite{van2014follow} noise has been shown to achieve low expected regret for the experts problem.

\textbf{Follow the Regularized Leader}\footnote{Equivalently, Online Mirror Descent, see \cite{mcmahan2011follow}.} (FTRL) is a powerful algorithm from online convex optimization framework \cite{HazanOCO16}, \cite{shalev2012online}. The usage of the linear loss function on a simplex allows to deal with the experts problem. The quadratic regularization leads to Online Gradient Descent (OGD) algorithm \cite{shalev2012online}, the Entropic regularization provides Exponential Weights algorithm, also known as \textbf{Hedge} \cite{FrS97}.

The idea of \textbf{multiplicative weight updates} (MW) of Hedge algorithm is used in many successive algorithms (MW2 \cite{cesa2007improved}, Variation-MW \cite{hazan2010extracting}, Optimistic-MW \cite{chiang2012online}, AEG-Path and AMEG-Path \cite{steinhardt2014adaptivity} and other algorithms \cite{gaillard2014second,koolen2015second}). The main goal of such algorithms is to obtain the first or the second order regret bound (e.g. in terms of best expert's loss) or achieve improvement for easy-data. Also, some Hedge-based algorithms (AdaHedge \cite{erven2011adaptive}, Flip-Flop \cite{de2014follow}) are designed to be parameter-free.

Almost all  described algorithms provide $O(\sqrt{T})$ adversarial regret guarantees w.r.t. the best expert in the pool. Note that this bound is minimax optimal up to some multiplicative factor because $\Omega(\sqrt{T})$ is known to be the lower bound \cite{cesa-bianchi}.\footnote{More precisely, the lower bound is $\Omega(\sqrt{T\ln N})$, where $N$ is the number of experts in the finite pool.}

An important variant of the experts problem is to develop an \textbf{adaptive master algorithm}. Such an algorithm has to track the shifts (switches) of the best expert and achieve low \textbf{tracking regret} with respect to shifting sequences of experts.\footnote{Sometimes in online learning the term \textbf{adaptive} means that the algorithm dynamically changes its learning rate during the game. Please do not get confused.} There are many meta-approaches such as restarts \cite{AdaAda16,hazan2009efficient} or specialist experts \cite{freund1997using} to create adaptive algorithms from non-adaptive ones. However, the most recognizable approach is to use the \textbf{Fixed Share} extension for Hedge \cite{HeW98,cesa2012mirror,AdaAda16,BoW2002}.

When it comes to the delayed feedback setting, many of the above described \textbf{non-delayed algorithms} do not have theoretical guarantees of performance or do not even have a modification for the delayed feedback setting.

There exists a bunch of meta-algorithms that allow to produce a version for delayed feedback setting from the basic non-delayed version \cite{WeO2002,Mes2009,Mes2007,joulani2013online}. The roots of meta-approach lie in the work \cite{WeO2002}. The authors studied the setting under fixed known feedback delay $D$. They proved that the optimal (non-adaptive) algorithm is to run $D+1$ independent versions of the optimal non-delayed algorithm on $D+1$ disjoint time grids ${GR_{d}=\{t\mbox{ }|\mbox{ }t\equiv d\mbox{ }(\mbox{mod } D+1)\}}$ for $1{\le d\le D+1}$. Thus, the optimal worst-case adversarial regret is $(D+1)\cdot \Omega(\sqrt{\frac{T}{D+1}})=\Omega(\sqrt{T(1+D)})$. The described meta-approach was enhanced for the unknown and dynamic feedback delay in \cite{joulani2013online}. Their meta-algorithm \textbf{BOLD} (Black-box Online Learning with Delays) also runs independent copies of  the basic algorithm on disjoint time lines.

We call algorithms obtained by meta-approaches (such as BOLD) \textbf{replicated} algorithms. Whereas replicating is simple and in some cases is theoretically optimal, it has several obvious practical drawbacks. Firstly, it uses only part of the observed data at every step of the game. Secondly, separate replicating learning processes generated by the meta-algorithm do not even interact.

Non-adaptive algorithms based on FTRL and FTPL have several \textbf{non-replicated} adaptations for delayed feedback setting. The most straightforward ones are Delayed OGD \cite{LSZ2009}, Delayed FTPL and FTRL \cite{QuanDGD15} and FTRL with Memory \cite{AHS2015}. For the fixed and known feedback delay $D$ their best regret bound is $O(\sqrt{T(1+D)})$, which is optimal.

\textbf{In this work,} we aim to create an adaptive non-replicated algorithm for the delayed feedback setting. We base our research on the Hedge algorithm (and its adaptive extension Fixed Share), which is the state-of-the-art basis for many existing algorithms. In order to achieve the desired goal, we develop the general probabilistic framework for Hedge-based algorithms. Using this framework, we propose the \textbf{General Hedging Algorithm} $\mathcal{G}$, prove its loss bounds  both for delayed and non-delayed cases. As a corollary of the main result, we show how classical non-delayed Hedge and Fixed Share algorithms (as the cases of $\mathcal{G}$) can be extended to the delayed feedback setting and what regret bounds they have.

\vspace{1mm}

\noindent\textbf{The main contributions of this paper are:}
\begin{enumerate}
\item Developing the General Hedging algorithm $\mathcal{G}$ for the delayed feedback scenario which is applicable to both non-delayed and delayed online settings. Proving the algorithm's loss bound (and regret bound, for the case of a countable set of experts) in a general form.
\item Developing (for a finite number of experts) non-replicated versions of basic Hedge \cite{FrS97} and adaptive Fixed Share \cite{HeW98} algorithms (as special cases of algorithm $\mathcal{G}$) for the delayed feedback scenario as well as deriving their regret bounds.
\end{enumerate}

The General Hedging algorithm  $\mathcal{G}$ which we develop is motivated by the paper \cite{adamskiy2012putting}. In that work the authors considered  the special case of the prediction with experts' advice with the \textbf{logarithmic loss function}. For the traditional non-delayed scenario ($D_{t}\equiv 0$) they developed the \textbf{Bayesian Merging Algorithm} for mixing (averaging) experts' predictions. Their algorithm is based on the natural graphical model (similar to the one in Figure \ref{figure:model-general-n} of Section \ref{sec-algorithm}) implied by the probabilistic origin of the logarithmic loss function. 

In contrast to \cite{adamskiy2012putting}, we consider the decision-theoretic online learning scenario (hedging), which is more general than prediction with experts' advice.\footnote{Hedging scenario assumes that the learner has access only to losses of experts while in prediction with experts' advice the learner knows experts' predictions and observes true outcomes (the losses are computed by using the known loss function). Prediction with experts' advice can be reduced to Hedging by forgetting about the expert's predictions and using only the computed losses of the experts.} At the same time we investigate both non-delayed and delayed feedback settings. We build the artificial probabilistic framework for arbitrary bounded losses by using the \textbf{entropithication} transform (loss exponentiation, see e.g. \cite{grunwald2004game,van2015fast}), state the General Hedging algorithm $\mathcal{G}$ and prove its loss bound.

\vspace{1mm}

\noindent\textbf{The article is structured as follows:}

In Section \ref{sec-prelim} we give preliminary notions, describe the notation and the setting of the game of the delayed feedback experts' weights allocation.

In Section \ref{sec-algorithm} we describe the developed probabilistic framework, the main algorithm $\mathcal{G}$, and formulate the main Theorem \ref{theorem-main-loss-bound-D} about its loss bound. In Section \ref{sec-proofs} we prove the main theorem. 

In Section \ref{sec-examples} we provide the examples of the application of algorithm $\mathcal{G}$: Delayed Hedge in Subsection \ref{hedge-delayed}, Delayed Fixed Share in Subsection \ref{sec-fs}. 


In Section \ref{sec-experiments} we conduct massive computational experiments and provide the detailed discussion of the results.

In \ref{sec-appendix-math} we provide the necessary mathematical background.

\section{Preliminaries}
\label{sec-prelim}

We use \textbf{bold} font to denote vectors (e.g. $\bm{w}\in \mathbb{R}^{M}$ for some integer $M$). In most cases, superscript is used for indexing elements of a vector (e.g. ${(w^{1},\dots,w^{N})=\bm{w}}$). Subscript is always used to indicate time (e.g. $l_{t}, R_{T}, w_{t}^{n}$).

We consider the online game of delayed hedging of a (finite or infinite) pool of experts. We use $\mathcal{N}$ to denote the pool and $n\in\mathcal{N}$ as an index of an expert. In this paper $\mathcal{N}$ is either a discrete set (e.g. ${\mathcal{N}=\{1,2,\dots, N\}}$) or a continuous subset of Euclidean space (e.g. ${\mathcal{N}=\mathbb{R}^{M}}$). By $\Delta(\mathcal{N})$ for a discrete (continuous) set $\mathcal{N}$ we denote all discrete (continuous) probability distributions on $\mathcal{N}$.

For convenience, we do all calculations in the paper assuming that $\mathcal{N}$ is a discrete countable set. All the results also hold true for the continuous $\mathcal{N}$ but sums over $n$ (e.g. ${\sum_{n\in\mathcal{N}}[\ldots]}$) should be replaced with corresponding intergrals (e.g. ${\int_{n\in\mathcal{N}}[\ldots]\cdot dn}$).

At each integer time step ${t=1,2,\dots, T}$ of the game the master (hedging) algorithm has to assign the weights $w^{n}_{t}$ to all experts ${n\in\mathcal{N}}$ so that 
$$\bm{w}_{t}=\{w_{t}^{n}\text{ for }n \in\mathcal{N}\}\in \Delta(\mathcal{N}).$$
At the end of the step $t+D_{t}$ (for integer $D_{t}\geq 0)$ experts reveal their losses ${\bm{l}_{t}=\{l_{t}^{n}\text{ for }n\in\mathcal{N}\}}$ at the step $t$. The loss of the algorithm's decision of the step $t$ is 
$$h_{t}=\sum_{n\in\mathcal{N}}l_{t}^{n}\cdot w_{t}^{n}=\langle \bm{w}_{t}, \bm{l}_{t}\rangle,$$ i.e.,
the average experts' loss w.r.t. $\bm{w}_{t}$. 

The sequence $D_{1},D_{2},\dots,D_{T}$ is called the sequence of  delays. For simplicity, we assume that ${t+D_{t}\leq T}$ for all $t=1,2,\dots,T$. In particular, $D_{T}=0$. We denote the set of all time indices of the losses revealed before the end of the step $t$ by ${\mathcal{D}_{t}=\{\tau|\tau+D_{\tau}\leq t\}}$. Also, we denote ${d\mathcal{D}_{t}=\mathcal{D}_{t}\setminus \mathcal{D}_{t-1}}$.

There are many scenarios on how the sequence $D_{t}$ is chosen (randomly, adversarially) and whether it is known to the learner in advance or not (see e.g. \cite{WeO2002,Mes2009,Mes2007,agarwal2011distributed,joulani2013online}). Yet, we do not specify the particular scenario, and consider the game in the general form.

In this work we assume that all the losses are bounded: ${l_{t}^{n}\in [0,H]}$ for all ${t=1,\dots,T}$ and ${n\in\mathcal{N}}$. This is a common assumption in online learning (see \cite{shalev2012online,HazanOCO16} or any other survey on online learning). The game setting is described by the following Protocol \ref{protocol}.

\begin{algorithm}
\SetAlgorithmName{Protocol}{empty}{Empty}
\SetKwInOut{Parameters}{Parameters}
\Parameters{Pool of experts $\mathcal{N}$; Game length $T$.}
\For{$t=1,2,\dots,T$}{
	Algorithm sets weights $\bm{w}_{t}\in\Delta(\mathcal{N})$\;
    \For{$\tau\in d\mathcal{D}_{t}$}{
      Nature reveals experts' losses $\bm{l}_{\tau}\in [0, H]^{\mathcal{N}}$\;
      Algorithm suffers loss $h_{\tau}=\langle \bm{w}_{\tau}, \bm{l}_{\tau}\rangle\in [0, H]$.
    }
 }
\caption{Online experts' weights allocation under delayed feedback}
\label{protocol}
\end{algorithm}

We use $H_{T}=\sum_{t=1}^{T}h_{t}$ and $L_{T}^{n}=\sum_{t=1}^{T}l_{t}^{n}$ to denote the cumulative (total) loss of the algorithm and expert $n\in\mathcal{N}$. 

The performance of the algorithm is measured by the (cumulative) regret. The regret is the difference between the cumulative loss of the algorithm and the cumulative loss of some given comparator. A typical approach is to compete with the best expert in the pool. The cumulative regret with respect to the best expert is
\begin{equation}
R_{T}=H_{T}-\min_{n\in\mathcal{N}}L_{T}^{n}.
\label{base-regret}
\end{equation}

The goal of the algorithm is to minimize the regret, i.e., ${R_{T}\rightarrow \min}$. In order to theoretically guarantee algorithm's performance, some upper bound is usually proved for the cumulative regret ${R_{T}\leq f(T)}$. 

In the basic setting \eqref{base-regret}, sub-linear upper bound $f(T)$ for the regret leads to the asymptotic performance of the algorithm equal to the performance of the best expert. More precisely, we have $\lim_{T\rightarrow \infty}\frac{R_{T}}{T}=0$.

\section{Generalized Hedging Algorithm}
\label{sec-algorithm}

In this section we describe the generalization $\mathcal{G}$ of the classical hedging algorithm based on exponential reweighing of experts' losses. The basic algorithm was introduced by \cite{FrS97}.

We investigate the adversarial case, i.e., no assumptions (stochastic, functional, etc.) are made about the nature of data (experts' losses). However, it turns out that in this case it is convenient to develop algorithms using some probabilistic framework.

\subsection{Probabilistic Framework}

Recall that $\bm{l}_{t}=\{l_{t}^{n}\text{ for }n\in\mathcal{N}\}$ is a dictionary of experts' losses at the step $t$. The framework that  we build implies that data is generated by some probabilistic model with hidden states. The graphical model is shown in Figure \ref{figure:model-general-n}.

\begin{figure}[!htb]
\begin{center}
\includegraphics[scale=0.6]{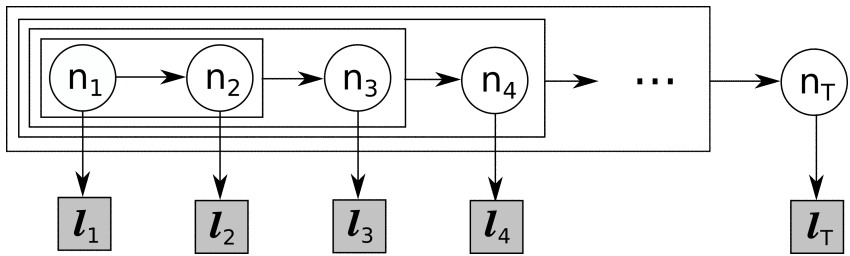}
\end{center}
\caption{General probabilistic model for data generation process}
\label{figure:model-general-n}
\end{figure} 

We suppose that there is some hidden sequence of experts $ n_{t}\in \mathcal{N}$ (for ${t=1,2,\dots,T}$) that generates the experts' losses $\bm{l}_{t}$. In particular, hidden expert $n_{t}$ at step $t$ is called \textbf{active expert}. The \textbf{conditional probability} to observe the vector $\bm{l}_{t}$ of experts' losses at the step $t$ is

\begin{equation}p(\bm{l}_{t}| n_{t})=p(l_{t}^{n_{t}}| n_{t})=\frac{e^{-\eta l_{t}^{n_{t}}}}{Z},
\label{observation-probability}
\end{equation}
where $\eta>0$ is some fixed learning rate and $Z=\int_{l\in [0, H]}e^{-\eta l}dl$ is the normalizing constant. Constant $Z$ is independent of both $n_{t}$ and $t$. The idea of conditional probability \eqref{observation-probability} is to assume that if at the step $t$ expert $n_{t}\in\mathcal{N}$ is active, then the loss vector $\bm{l}_{t}=(l_t^1, l_t^2,...,l_t^N)$ is not completely random, i.e. loss $l_t^{n_{t}}$ is random, while all the other components are deterministic (e.g. given by nature).\footnote{Another definition of conditional probability is also possible. All the elements $(l_t^1, l_t^2,...,l_t^N)$ can be considered as independent random variables. If expert $n_{t}\in\mathcal{N}$ is active, the probability of observing $l_t^{n_{t}}$ is equal to the current right-hand side of equation \eqref{observation-probability}. All the other losses are i.i.d. uniform variables on $[0, H]$. For the case of finite $\mathcal{N}$ the formula \eqref{observation-probability} is replaced by
\begin{equation}p(\bm{l}_{t}| n_{t})=p(l_{t}^{n_{t}}| n_{t})\times\bigg[\prod_{n\neq n_{t}}p(l_{t}^{n}|n_{t})\bigg]=\frac{e^{-\eta l_{t}^{n_{t}}}}{Z}\times \frac{1}{H^{N-1}},
\label{observation-probability-hard}
\end{equation}
i.e. has an additional denominating factor of $H^{N-1}$. However, for the infinite number of experts this approach requires a more detailed specification of probabilities in terms of measures, because the denominator becomes infinite. In Section \ref{sec-proofs} we will see that the exact value of the normalization constant $Z$ is important neither for the algorithm, nor for its regret bound. Thus, for convenience it is reasonable to consider the model \eqref{observation-probability}.}

For the first active expert $n_{1}$ some known \textbf{prior distribution} is given $p(n_{1})=p_{0}(n_{1})$. The sequence $(n_{1},\dots, n_{T})$ of active experts is generated step by step. For $t\in\{1,\dots,T-1\}$ each $ n_{t+1}$ is sampled from some known distribution $p(n_{t+1}|N_{t})$, where $N_{t}=(n_{1},\dots, n_{t})$.\footnote{In case $p( n_{t+1}|N_{t})=p(n_{t+1}| n_{t})$, we obtain a traditional Hidden Markov Process: the hidden state at step $t+1$ depends only on the previous hidden state at step $t$.} Thus, active expert $n_{t+1}$ depends on the previous experts $N_{t}$.

For every sequence of experts $N_{t}=(n_{1},n_{2},\dots,n_{t})$ we denote the cumulative loss of the sequence by $$L_{t}^{N_{t}}=\sum_{\tau=1}^{t}l_{\tau}^{n_{\tau}}.$$
For all $t$ we define the following lists of loss vectors:
\begin{equation}
    \bm{L}_{t}=(\bm{l}_{1},\dots,\bm{l}_{t}),\qquad {\bm{L}_{\mathcal{D}_{t}}=\{\bm{l}_{\tau}\text{ for }\tau\in \mathcal{D}_{t}\}}, \qquad {\bm{L}_{d\mathcal{D}_{t}}=\{\bm{l}_{\tau}\text{ for }\tau\in d\mathcal{D}_{t}\}}.
    \nonumber
\end{equation}

The considered probabilistic model is:
\begin{eqnarray}
p(N_{T},\bm{L}_{T})=p(N_{T})\cdot p(\bm{L}_{T}|N_{T})=
\bigg[p_{0}(n_{1})\prod_{t=2}^{T}p(n_{t}|N_{t-1})\bigg]\cdot \bigg[\prod_{t=1}^{T}p(\bm{l}_{t}| n_{t})\bigg].
\label{model-general}
\end{eqnarray}

The probability $p(N_{T})$ is that of hidden states (active experts).\footnote{The form 
$p({N}_{T})=p_{0}(n_{1})\prod_{t=2}^{t}p(n_{t}|{N}_{t-1})$ is used only for convenience and association with online scenario. It does not impose any restrictions on the type of probability distribution. In fact, $p(N_{T})$ may be any distribution on $\mathcal{N}^{T}$ of any form.}

\subsection{General Hedging Algorithm}

The hedging algorithm \ref{algorithm-main} is shown below. We denote it by $\mathcal{G}=\mathcal{G}(p)$ ($\mathcal{G}$ stands for \textbf{G}eneral), where $p$ indicates the probability distribution $p({N}_{T})$ of active experts to which the algorithm is applied.

\begin{algorithm}
\SetKwInOut{Parameters}{Parameters}
\Parameters{Pool of experts $\mathcal{N}$; Game length $T$; \\
Distribution on experts' sequences $p(\cdot)$.}
\For{$t=1,2,\dots,T$}{
	$\bm{w}_{t}\leftarrow p(n_{t}|\bm{L}_{\mathcal{D}_{t-1}})$\;
    \For{$\tau\in d\mathcal{D}_{t}$}{
      Nature reveals experts' losses $\bm{l}_{\tau}\in [0, H]^{\mathcal{N}}$\;
      Algorithm suffers loss $h_{\tau}\leftarrow\langle \bm{w}_{\tau}, \bm{l}_{\tau}\rangle$.
    }
 }
\caption{General Hedge Algorithm ($\mathcal{G}$)}
\label{algorithm-main}
\end{algorithm}

The idea of the algorithm $\mathcal{G}$ is simple: set the weight allocation $\bm{w}_{t}$ for the current step $t$ according to the posterior probability $p(n_{t}|\bm{L}_{\mathcal{D}_{t-1}})$ of the expert $n_{t}$ computed from the underlying probabilistic model. We illustrate this idea in Figure \ref{figure:model-posterior}.

\begin{figure}[!htb]
\begin{center}
\includegraphics[scale=0.6]{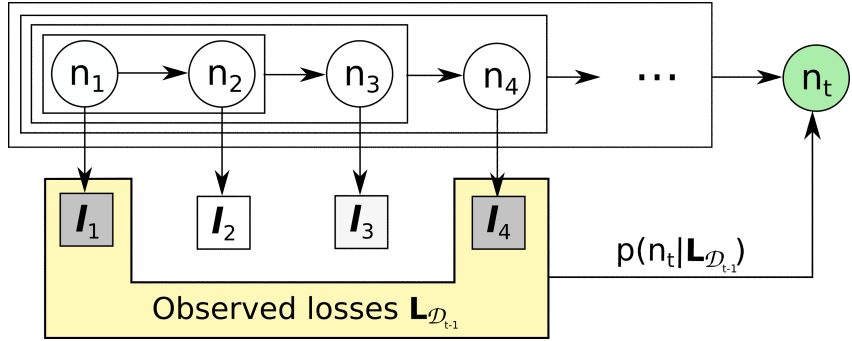}
\end{center}
\caption{The idea of General Hedge Algorithm. The weights $\bm{w}_{t}$ used at the step $t$ correspond to the posterior probability $p(n_{t}|L_{\mathcal{D}_{t-1}})$ of the hidden state at the current step}
\label{figure:model-posterior}
\end{figure} 

Consider a finite pool $\mathcal{N}=\{1,\dots,N\}$ and set $p_{0}(n_{1})\equiv \frac{1}{N}$ for all $n_{1}\in\mathcal{N}$. Consider the non-delayed scenario ($D_{t}\equiv 0$ for all $t$). If we use ${p(n_{1}=n_{2}=\dots=n_{T})\equiv 1}$, the experts' weights become $w_{t}^{n}\propto e^{-\eta L_{t-1}^{n}}$. The resulting algorithm $\mathcal{G}(p)$ turns to be classical non-delayed Hedge (for more detailed discussion see Subsection \ref{hedge-delayed}). Also, non-delayed Fixed Share is the case of $\mathcal{G}$ for specially chosen Markovian $p(\cdot)$ (see Subsection \ref{sec-fs}).

The \textbf{time and memory complexities} of the algorithm depend on the properties of the underlying distribution $p$. For Markovian models (when hidden state $n_{t}$ depends only on the previous state $n_{t-1}$ for all $t$) it is possible to provide linear in $T+\sum_{t=1}^{T}D_{t}$ schemes to compute weights (see Subsection \ref{sec-fs}) which require $O(N\cdot \max_t D_{t})$ memory. For the arbitrary $p(\cdot)$ time and memory complexity may be even exponential.

\subsection{Guarantees of Performance}

The algorithm has \textbf{theoretical guarantees of performance}. We state the following main theorem.

\begin{theorem}[Adversarial loss bound for algorithm $\mathcal{G}$]
\label{theorem-main-loss-bound-D}
Let $\mathcal{N}$ be a countable (or continuous) set of experts. Let $p(\cdot)$ be a discrete (or continuous) distribution on $\mathcal{N}^{T}$. Then for the hedging algorithm $\mathcal{G}$ applied to model $p$ with learning rate $\eta>0$ the following upper bound for the total loss over the entire game holds true:
\begin{eqnarray}
H_{T}\leq -\frac{1}{\eta}\ln \bigg[\mathbb{E}_{p({N}_{T})}\big[e^{-\eta L_{T}^{{N}_{T}}}\big]\bigg]+
\eta\frac{H^{2}}{8}T+\eta \big[\frac{H^{2}\cdot \sum_{t=1}^{T}D_{t}}{4}\big].
\label{main-loss-bound-convex}\end{eqnarray}
\end{theorem}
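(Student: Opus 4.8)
The plan is to reduce the loss bound to an analysis of the Bayesian marginal likelihood of the probabilistic model \eqref{model-general}, and to control the cost of conditioning on the delayed information set by a second-order stability argument. First I would pass from the realized loss $h_t$ to the \emph{mix loss}
$$m_t = -\frac{1}{\eta}\ln\sum_{n\in\mathcal{N}} w_t^n\, e^{-\eta l_t^n} = -\frac{1}{\eta} \ln \mathbb{E}_{n\sim \bm{w}_t}\big[e^{-\eta l_t^n}\big].$$
Since each $l_t^n\in[0,H]$ and $\bm{w}_t\in\Delta(\mathcal{N})$, Hoeffding's lemma applied to the bounded random variable $l_t^n$ with $n$ drawn from $\bm{w}_t$ yields $m_t \ge h_t - \eta H^2/8$, i.e. $h_t \le m_t + \eta H^2/8$. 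Summing over $t=1,\dots,T$ gives $H_T \le \sum_{t=1}^{T} m_t + \eta \frac{H^2}{8}T$, which already produces the second term of \eqref{main-loss-bound-convex}. It remains to bound $\sum_t m_t$ by the log-likelihood term plus the delay correction.

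Second, I would introduce the log-marginal-likelihood potential $\Psi(S)=\ln \mathbb{E}_{p(N_T)}\big[e^{-\eta\sum_{\tau\in S} l_\tau^{n_\tau}}\big]$ indexed by a set $S$ of time steps, so that $\Psi(\emptyset)=0$ and $\Psi(\{1,\dots,T\})=\ln\mathbb{E}_{p(N_T)}[e^{-\eta L_T^{N_T}}]$. The key identity is that the algorithm's weights are exactly the Bayesian posterior $w_t^n = p(n_t=n\mid \bm{L}_{\mathcal{D}_{t-1}})\propto \mathbb{E}_{p(N_T)}[\mathbf{1}[n_t=n]\, e^{-\eta\sum_{\tau\in\mathcal{D}_{t-1}}l_\tau^{n_\tau}}]$; substituting this into $m_t$ and using $t\notin \mathcal{D}_{t-1}$ collapses the expectation into a ratio of normalizers, so that $m_t = -\frac{1}{\eta}\big[\Psi(\mathcal{D}_{t-1}\cup\{t\}) - \Psi(\mathcal{D}_{t-1})\big]$. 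In the non-delayed idealization where one conditions on the full past $\{1,\dots,t-1\}$, the analogous term $\tilde{m}_t=-\frac{1}{\eta}[\Psi(\{1,\dots,t\})-\Psi(\{1,\dots,t-1\})]$ telescopes exactly to $\sum_t \tilde{m}_t = -\frac{1}{\eta}\Psi(\{1,\dots,T\}) = -\frac{1}{\eta}\ln\mathbb{E}_{p(N_T)}[e^{-\eta L_T^{N_T}}]$, the first term of \eqref{main-loss-bound-convex}. Thus I only need to control $\sum_t(m_t-\tilde{m}_t)$.

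Third, writing $G_t(S)=\Psi(S\cup\{t\})-\Psi(S)$ for the marginal gain of incorporating index $t$, I have $m_t-\tilde{m}_t = -\frac{1}{\eta}\big[G_t(\mathcal{D}_{t-1}) - G_t(\{1,\dots,t-1\})\big]$, a difference of marginal gains over the nested sets $\mathcal{D}_{t-1}\subseteq\{1,\dots,t-1\}$. The core lemma I would prove is a \emph{stability of the marginal gain}: adjoining a single index $\sigma$ to the conditioning set changes $G_t$ by at most $\eta^2 H^2/4$. This follows by viewing $G_t(S\cup\{\sigma\})-G_t(S)$ as the mixed second difference of $\phi(a,b)=\ln\mathbb{E}_{p(N_T\mid\bm{L}_S)}[e^{-\eta(a\, l_t^{n_t}+b\, l_\sigma^{n_\sigma})}]$, which equals $\int_0^1\!\!\int_0^1 \eta^2\,\mathrm{Cov}_{a,b}(l_t^{n_t},l_\sigma^{n_\sigma})\,da\,db$ under the exponentially tilted posterior; since both coordinates lie in $[0,H]$, Popoviciu's inequality bounds the covariance by $H^2/4$. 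Telescoping this over the $s_t=|\{1,\dots,t-1\}\setminus\mathcal{D}_{t-1}|$ indices by which the two sets differ gives $|m_t-\tilde{m}_t|\le \frac{\eta H^2}{4}s_t$. A double-counting of the delays, in which each index $\tau$ is outstanding for exactly $D_\tau$ decision steps, yields $\sum_t s_t = \sum_t D_t$, hence $\sum_t(m_t-\tilde{m}_t)\le \eta\frac{H^2}{4}\sum_t D_t$, the third term of \eqref{main-loss-bound-convex}; combining the three displays finishes the proof.

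The main obstacle is the marginal-gain stability lemma of the third step. Everything else is standard exponential-weights bookkeeping, but quantifying the cost of conditioning on the delayed set $\mathcal{D}_{t-1}$ rather than on the full past requires the second-order (covariance) expansion of the tilted log-marginal-likelihood together with the Popoviciu bound, and it is precisely this $\eta^2 H^2/4$-per-outstanding-loss estimate that, after summation, reproduces the exact $\eta H^2 \sum_t D_t/4$ delay penalty.
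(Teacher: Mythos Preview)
Your proposal is correct and reaches exactly the bound \eqref{main-loss-bound-convex} with the stated constants, but it proceeds along a genuinely different line from the paper's proof. Both arguments first pass from $h_t$ to a mix loss via Hoeffding, and both use the counting identity $\sum_t\big[(t-1)-|\mathcal{D}_{t-1}|\big]=\sum_t D_t$ (Lemma~\ref{lemma-delay-sum}). The difference lies in how the delay penalty is obtained. The paper first treats the non-delayed case, then for the delayed case compares the \emph{realized losses} $h_t^{\mathcal{D}}$ and $h_t^{0}$ of the delayed and non-delayed algorithms directly: it reduces the general model $p$ to the constant-expert model via a super-expert construction, and then bounds $|h_t^{0}-h_t^{\mathcal{D}}|$ through the total-variation distance between the two weight vectors, controlled by Pinsker's inequality combined with Hoeffding (Lemma~\ref{lemma-change-bound}). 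You instead stay at the level of \emph{mix losses}: you write $m_t$ as an increment of the log-partition potential $\Psi(\cdot)$, compare it with its non-delayed analogue $\tilde{m}_t$, and bound the discrepancy by a second-order stability estimate, namely a mixed second difference of $\Psi$ expressed as an integrated covariance of two bounded losses under a tilted posterior, bounded by $H^2/4$ via Popoviciu. Your route avoids the super-expert reduction entirely and handles arbitrary $p$ in one pass; the paper's route, on the other hand, yields the somewhat stronger intermediate statement that the delayed and non-delayed \emph{weight allocations} themselves are close in total variation, which may be of independent interest. Both approaches arrive at the identical $\eta H^{2}\sum_t D_t/4$ delay term.
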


The proof of this theorem is given in Section \ref{sec-proofs}. Note that while the algorithm may seem to be designed for the stochastic setting, we apply it to the pure adversarial case\footnote{The only assumption is that the losses are bounded, i.e. $l_{t}^{n}\in[0,H]$ for all $t=1,\dots,T$ and $n\in\mathcal{N}$.} and obtain the loss guarantees. At the same time, the adversarial loss bound \eqref{main-loss-bound-convex} depends on the probability distribution $p(\cdot)$ for which the algorithm is designed.

One may wonder how Theorem \ref{theorem-main-loss-bound-D} is applied to estimate the regret, for example, the regret with respect to the best expert \eqref{base-regret}. If the set $\mathcal{N}$ of experts is countable, then the following simple corollary holds true.

\begin{corollary}[Adversarial regret bound for algorithm $\mathcal{G}$]
\label{corollary-regret-sequence}
If the set of experts $\mathcal{N}$ is countable, then under the conditions of Theorem \ref{theorem-main-loss-bound-D}, the regret with respect to any sequence ${{N}_{T}^{*}=(n_{1}^{*},n_{2}^{*},\dots, n_{T}^{*})\in\mathcal{N}^{T}}$ is

\begin{eqnarray}R_{T}({N}_{T}^{*})=H_{T}-L_{T}^{{N}_{T}^{*}}\leq 
-\frac{1}{\eta}\ln p({N}_{T}^{*})+\eta\frac{H^{2}}{8}T+\eta \big[\frac{H^{2}\cdot \sum_{t=1}^{T}D_{t}}{4}\big].
\label{cor-regret-bound}
\end{eqnarray}
\end{corollary}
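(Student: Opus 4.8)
The plan is to derive the corollary directly from Theorem~\ref{theorem-main-loss-bound-D} by a single monotonicity-plus-single-term argument, with no new analysis of the algorithm itself. The only quantity in the loss bound that depends on $p(\cdot)$ is the logarithmic term $-\frac{1}{\eta}\ln\big[\mathbb{E}_{p(N_T)}[e^{-\eta L_T^{N_T}}]\big]$, so the entire task reduces to producing a clean upper bound for this term in terms of a fixed comparator sequence $N_T^*$.

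First I would expand the expectation as a sum over the countable index set, writing
\begin{equation}
\mathbb{E}_{p(N_T)}\big[e^{-\eta L_T^{N_T}}\big]=\sum_{N_T\in\mathcal{N}^{T}}p(N_T)\,e^{-\eta L_T^{N_T}}.
\nonumber
\end{equation}
Since every summand is nonnegative (both $p(N_T)\ge 0$ and $e^{-\eta L_T^{N_T}}>0$), the full sum is bounded below by any one of its terms; I would keep only the term indexed by the comparator $N_T^*$, giving
\begin{equation}
\mathbb{E}_{p(N_T)}\big[e^{-\eta L_T^{N_T}}\big]\ge p(N_T^*)\,e^{-\eta L_T^{N_T^*}}.
\nonumber
\end{equation}
This ``drop all but one term'' step is exactly the standard device from the Hedge/exponential-weights analysis, and it is where countability of $\mathcal{N}$ is used (so that a single summand is legitimately a lower bound for the series).

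Next I would apply the function $x\mapsto-\frac{1}{\eta}\ln x$, which is strictly decreasing for $\eta>0$, so the lower bound on the expectation turns into an upper bound:
\begin{equation}
-\frac{1}{\eta}\ln\big[\mathbb{E}_{p(N_T)}[e^{-\eta L_T^{N_T}}]\big]\le-\frac{1}{\eta}\ln\big[p(N_T^*)\,e^{-\eta L_T^{N_T^*}}\big]=-\frac{1}{\eta}\ln p(N_T^*)+L_T^{N_T^*}.
\nonumber
\end{equation}
Substituting this into the bound \eqref{main-loss-bound-convex} of Theorem~\ref{theorem-main-loss-bound-D} and subtracting $L_T^{N_T^*}$ from both sides yields exactly \eqref{cor-regret-bound}, completing the argument.

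I do not expect a genuine obstacle here: the corollary is a purely algebraic consequence of the already-proved loss bound, and the single term trick is routine. The only points requiring a moment of care are recording that $-\frac{1}{\eta}\ln(\cdot)$ is order-reversing (so that a lower bound on the expectation gives the desired upper bound) and noting explicitly that the restriction to countable $\mathcal{N}$ is what legitimises bounding the series below by one summand; the continuous case is deliberately excluded from the corollary's hypotheses precisely because isolating a single ``point mass'' is not meaningful there.
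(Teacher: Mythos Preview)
Your proposal is correct and follows essentially the same approach as the paper: both bound the expectation from below by the single summand $p(N_T^*)e^{-\eta L_T^{N_T^*}}$, apply $-\frac{1}{\eta}\ln(\cdot)$, and substitute into the loss bound of Theorem~\ref{theorem-main-loss-bound-D}. Your write-up is in fact a bit more explicit about the monotonicity step and the role of countability, but the argument is identical.
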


\begin{proof} The corollary results from the following inequality for the expectation in the right-hand side of \eqref{main-loss-bound-convex}:
\begin{eqnarray}-\frac{1}{\eta}\ln \bigg[\mathbb{E}_{p({N}_{T})}\big[e^{-\eta L_{T}^{{N}_{T}}}\big]\bigg]\leq -\frac{1}{\eta}\ln \bigg[p({N}^{*}_{T})e^{-\eta L_{T}^{{N}_{T}^{*}}}\bigg]=
L_{T}^{{N}_{T}^{*}}-\frac{1}{\eta}\ln p({N}_{T}^{*}),
\nonumber
\end{eqnarray}
which leads to the desired bound.\end{proof}

If $\mathcal{N}$ is continuous under the conditions of Theorem \ref{theorem-main-loss-bound-D}, then the first term in the upper bound \eqref{main-loss-bound-convex} is represented by the integral (instead of a countable sum). It is not possible to extract a single summand as in the finite case. However, sometimes the expectation can be directly computed or estimated w.r.t. the loss of the best expert in the pool. For example, see  approaches of \cite{VoV2001,Kaln2007,zhdanov2010identity} applied to Online Kernel Regression.

The regret bound \ref{cor-regret-bound} is a linear function of game length $T$. Nevertheless, if the game length $T$ is known in advance, one may achieve sub-linear regret bound by choosing the learning rate $\eta$ to be dependent on game length $T$. Particular examples of learning rates $\eta=\eta(T)$ for specific underlying distributions $p(\cdot)$ are provided in following Section \ref{sec-examples}.

\section{Examples}
\label{sec-examples}

In this Section we provide the examples of useful underlying probability models $p(\cdot)$ and use them to apply the algorithm $\mathcal{G}$ to construct online expert weight allocation algorithms. We consider a finite pool of experts ${\mathcal{N}=\{1,2,\dots, N\}}$.

\subsection{Basic Delayed Exponential Weights (Hedge)}
\label{hedge-delayed}
Consider the following underlying probability $p$. Let ${p(n_{1})=p_{0}(n_{1})}$ be some prior and ${p(n_{t}|n_{t-1})=\mathbb{I}_{[n_{t}=n_{t-1}]}}$ for ${t=2,\dots,T}$. This means that the hidden active expert does not change during the game. We denote the corresponding algorithm applied to $p$ by ${\mathcal{G}_{\text{base}}=\mathcal{G}_{\text{base}}(p_{0})}$. The corresponding graphical model is shown in Figure \ref{figure:model-simple-n}.

\begin{figure}[!htb]
\begin{center}
\includegraphics[scale=0.6]{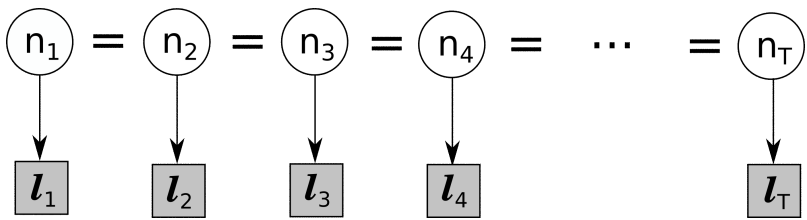}
\end{center}
\caption{Hedge probabilistic model for data generation process}
\label{figure:model-simple-n}
\end{figure} 
It is easy to see that for all $t$ the weight allocation $${w_{t}^{n}\propto p_{0}(n)\cdot e^{-\eta L_{\mathcal{D}_{t}}^{n}}}$$ is proportional to the observed losses of the expert $n\in\mathcal{N}$. If there are no delays ($D_{t}\equiv 0$ for all $t$), then the algorithm becomes classical Hedge by \cite{FrS97}.

\subsubsection{Algorithm}

The pseudo-code of Algorithm \ref{algorithm-delayed-hedge} ($\mathcal{G}_{\text{base}}$) is shown below. In the code we assume that the operation \textbf{Output}($\ldots$) sets the weight allocation ($\bm{w}_{t}$) for the current step. Function \textbf{GetRevealedLosses}() obtains all the vectors of losses ${\bm{l}_{\tau}=(l_{\tau}^{1},\dots,l_{\tau}^{N})}$ of the steps ${\tau\in d\mathcal{D}_{t}}$ in the form of an iterable list of pairs ${(\tau, \bm{l}_{\tau})}$.

\begin{algorithm}
\SetKwInOut{Parameters}{Parameters}
\Parameters{Number of experts $N$, rounds $T$;\\ Learning rate $\eta>0$; Prior $\bm{p}_{0}\in\Delta(N)$.}
$\bm{w}\leftarrow \bm{p}_{0}$\;
\For{$t=1,2,\dots,T$}{
	\textit{Output}($\bm{w}$)\;
    \For{$(\tau, \bm{l}_{\tau})$ in GetRevealedLosses()}{
    	$\bm{w}\leftarrow \bm{w} \cdot \exp(-\eta\cdot \bm{l}_{\tau})$\;
        $\bm{w}\leftarrow \bm{w}/\|\bm{w}\|_{1}$.
    }  
 }
\caption{Delayed Hedge ($\mathcal{G}_{base}$)}
\label{algorithm-delayed-hedge}
\end{algorithm}

The algorithm requires $O(N)$ \textbf{memory} and $O(NT)$ \textbf{time complexity}.

\subsubsection{Regret bound}

According to Corollary \ref{corollary-regret-sequence}, the regret of the algorithm with respect to any fixed expert $n\in\mathcal{N}$ is bounded:
\begin{eqnarray}
R_{T}(n)=H_{T}-L_{T}^{n}\leq
-\frac{1}{\eta}\ln p_{0}(n)+\eta\frac{H^{2}}{8}T+\eta \big[\frac{H^{2}\cdot \sum_{t=1}^{T}D_{t}}{4}\big].
\nonumber
\end{eqnarray}

The typical prior is $p_{0}\equiv \frac{1}{N}$. For this basic case in the non-delayed feedback setting ($D_{t}\equiv 0$) the $\eta$ is chosen in advance (with prior knowledge of $T$) to minimize the regret. The optimal choice is $\eta\propto \frac{1}{H\sqrt{T}}$, which results in $O(\sqrt{T})$ classical regret.

However, the choice of optimal $\eta$ in the delayed setting highly depends on how the sequence of delays is generated. If the learner knows $\sum_{t=1}^{T}D_{t}$ in advance or $D_{t}$ is sampled from some distribution with known expectation $\mathbb{E}D$, the optimal choice is 
\begin{equation}
\eta\propto \frac{1}{H\sqrt{T+\sum_{t=1}^{T}D_{t}}}
\qquad
\text{or}
\qquad
\eta\propto \frac{1}{H\sqrt{T(1+\mathbb{E}D)}} 
\nonumber
\end{equation}
respectively. This choice results in $O(\sqrt{T+\sum_{t=1}^{T}D_{t}})$ and $O(\sqrt{T(1+\mathbb{E}D)})$ regret bounds respectively.

If the sequence of delays is chosen by an adversary, the classical choice $\eta\propto \frac{1}{H\sqrt{T}}$ results in $O[\sqrt{T}(1+\overline{D})]$ regret, where $\overline{D}=\frac{1}{T}\sum_{t=1}^{T}D_{t}$.

\subsection{Adaptive Delayed Exponential Weights (Fixed Share)}
\label{sec-fs}

Consider the following underlying probability $p$. Let ${p(n_{1})=p_{0}(n_{1})}$ be some prior and
\begin{equation}
{p(n_{t}|n_{t-1})=\alpha_{t}p_{0}(n_{t})+(1-\alpha_{t})\cdot \mathbb{I}_{[n_{t}=n_{t-1}]}}
\label{fixed-share-trans-prob}
\end{equation}
for $t=2,\dots,T$ and sequence $0\leq \alpha_{2},\dots,\alpha_{T}\leq 1$. This means that the hidden active expert changes to random (according to prior $p_{0}$) between steps $t-1$ and $t$ with some small probability $\alpha_{t}$. 

We denote the corresponding algorithm applied to $p$ by $\mathcal{G}_{\text{fs}}$. The graphical model is shown in Figure \ref{figure:model-markov-n}.

\begin{figure}[!htb]
\begin{center}
\includegraphics[scale=0.6]{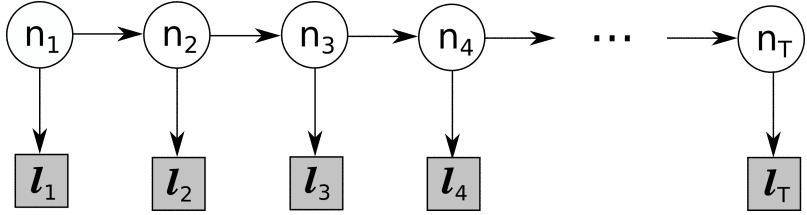}
\end{center}
\caption{Fixed Share probabilistic model for data generation process}
\label{figure:model-markov-n}
\end{figure}

The sequence $\alpha_{t}$ can be arbitrary. However, the classical approach is to use $\alpha_{t}=\frac{1}{t}$ (see \cite{HeW98,AdaAda16,cesa2012mirror}), because in this special case the regret bound is better (than e.g. in the case $\alpha_{t}\equiv const$). In our case at the end of the subsection we will also use the sequence $\alpha_{t}=\frac{1}{t}$ when estimating the regret.

\subsubsection{Equivalence to Fixed Share in the Non-Delayed Setting}

To begin with, we examine the application of algorithm $\mathcal{G}$ to the described probabilistic model $p(\cdot)$ in the non-delayed case, i.e., $D_{t}\equiv 0$ for all ${t=1,2,\dots,T}$. In the non-delayed case we have $\mathcal{D}_{t}=\{1,2,\dots,t\}$ for all $t$. Thus, for all $t$ we get
$$w_{t}^{n}=p(n_{t}=n|\bm{L}_{\mathcal{D}_{t-1}})=p(n_{t}=n|\bm{L}_{t-1}).$$
We set ${\bm{u}_{t}=(u_{t}^{1},\dots, u_{t}^{N})\in\Delta(\mathcal{N})}$,  $u_{t}^{n}=p(n_{t}=n|\bm{L}_{t})$ for all ${n\in\mathcal{N}}$ and ${t=1,2,\dots, T}$. We get
\begin{eqnarray}
w_{t}^{n}=p(n_{t}=n|\bm{L}_{t-1})=
\nonumber
\\
\sum_{n'\in\mathcal{N}}\bigg[p(n_{t}=n|n_{t-1}=n')\cdot \underbrace{p(n_{t-1}=n'|\bm{L}_{t-1})}_{u_{t-1}^{n'}}\bigg].
\label{fs-reweighing-part}
\end{eqnarray}
Combining \eqref{fs-reweighing-part} with \eqref{fixed-share-trans-prob} we see that \begin{equation}
\bm{w}_{t}=(1-\alpha_{t})\cdot \bm{u}_{t-1}+\alpha_{t}\cdot \bm{p}_{0}.
\label{fixed-share-first-update}
\end{equation}
On the other hand,
\begin{eqnarray}
u_{t}^{n}=p(n_{t}=n|\bm{L}_{t})=\frac{p(\bm{L}_{t}|n_{t}=n)\cdot p(n_{t}=n)}{p(\bm{L}_{t})}=
\nonumber
\\
\frac{p(\bm{l}_{t}|n_{t}=n)\cdot p(\bm{L}_{t-1}|n_{t}=n)\cdot p(n_{t}=n)}{p(\bm{L}_{t})}=
\nonumber
\\
\underbrace{p(n_{t}=n|\bm{L}_{t-1})}_{w_{t}^{n}}\cdot \underbrace{p(\bm{l}_{t}|n_{t}=n)}_{\exp (-\eta\cdot l_{t}^{n})}\cdot \big[\frac{p(\bm{L}_{t-1})}{p(\bm{L}_{t})}\big].
\nonumber
\end{eqnarray}
Thus, 
\begin{equation}
\bm{u}_{t}\propto \bm{w}_{t}\cdot \exp(-\eta \cdot \bm{l}_{t}).
\label{fixed-share-second-update}
\end{equation}
Formulas \eqref{fixed-share-first-update} and \eqref{fixed-share-second-update} mean that the algorithm's decision $\bm{w}_{t}$ can be iteratively updated step by step by using the additional weight $\bm{u}_{t}$. The obtained weight updates \eqref{fixed-share-first-update} and \eqref{fixed-share-second-update} exactly match the updates of the Fixed Share algorithm by \cite{HeW98}. Thus, in the non-delayed case $\mathcal{G}(p)$ is equal to Fixed Share.

\subsubsection{Algorithm for the Delayed Setting}

Now we examine the algorithm $\mathcal{G}(p)$ under the setting of the delayed feedback, i.e., for all ${t=1,2,\dots,T}$ delay $D_{t}$ is some non-negative integer value.

For all $t=1,2,\dots,T$ and $\tau\leq t$ we use $\mathcal{D}_{t}^{\tau}$ to denote the set of all time steps $t'\leq \tau$ such that the loss vector $\bm{l}_{t'}$ is revealed not later than the step $t$. Formally, we define
$$\mathcal{D}_{t}^{\tau}=\{t'|(t'+D_{t'}\leq t)\land (t'\leq \tau)\}.$$

In the next few paragraphs we describe the efficient scheme to recompute the algorithms decision $\bm{w}_{t}$ at every step $t$.

Suppose that at the beginning of the step $t$ we keep all the probabilities $p(n_{\tau}|\bm{L}_{\mathcal{D}_{t}^{\tau}})$ for all $\tau=1,\dots,t$ and $n\in\mathcal{N}$. We denote the corresponding $N$-dimensional probability vectors by $\bm{u}_{\tau}$. We also denote $\bm{u}_{0}=\bm{p}_{0}$.  Similar to \eqref{fs-reweighing-part} calculations lead to the simple formula that allows to obtain $\bm{w}_{t}$:
$$\bm{w}_{t}=(1-\alpha_{t})\cdot \bm{u}_{t-1}+\alpha_{t}\cdot \bm{p}_{0}.$$
After the decision on $\bm{w}_{t}$ is made, the algorithm obtains losses of steps ${\tau\in d\mathcal{D}_{t}}$. Thus, we need to calculate new probability vector $\bm{u}_{t}$ with coordinates $p(n_{\tau}|\bm{L}_{\mathcal{D}_{t}^{\tau}})$. Moreover, we have to update all vectors $\bm{u}_{\tau}$ for $\tau<t$ from $p(n_{\tau}|\bm{L}_{\mathcal{D}_{t-1}^{\tau}})$ to $p(n_{\tau}|\bm{L}_{\mathcal{D}_{t}^{\tau}})$.

Let $\tau_{\min}=\min\{\tau:\tau \in d\mathcal{D}_{t}\}$. Note that all  $\bm{u}_{\tau}$ for $\tau <\tau_{\min}$ do not require being updated because $\mathcal{D}_{t}^{\tau}=\mathcal{D}_{t-1}^{\tau}$. Next, for $\tau=\tau_{\min},\dots,t-1, t$ we recompute the vectors $\bm{u}_{\tau}$ iteratively. 

We explain how to compute $\bm{u}_{\tau}$ below (assuming that previous $\bm{u}_{\tau-1}$ is already computed). For convenience, we introduce the temporary vector variable ${\bm{v}=(v^{1},\dots, v^{N})\in\Delta(\mathcal{N})}$, where $v^{n}=p(n_{\tau}=n|\bm{L}_{\mathcal{D}_{t}^{\tau-1}})$ for all ${n\in\mathcal{N}}$. First, we express $\bm{v}$ using $\bm{u}_{\tau-1}$. Next, we express $\bm{u}_{\tau}$ using $\bm{v}$.

We deduce the formula to compute $v^{n_{\tau}}$ by using $u_{\tau-1}^{n_{\tau}}$:
\begin{eqnarray}
v^{n_{\tau}}=p(n_{\tau}|\bm{L}_{\mathcal{D}_{t}^{\tau-1}})=\sum_{n_{\tau-1}\in\mathcal{N}}p(n_{\tau-1}|\bm{L}_{\mathcal{D}_{t}^{\tau-1}})\cdot p(n_{\tau}|n_{\tau-1})=
\label{fixed-share-transition-decomposition}
\\
\sum_{n_{\tau-1}\in\mathcal{N}}\bigg[p(n_{\tau-1}|\bm{L}_{\mathcal{D}_{t}^{\tau-1}})\cdot \big[\alpha_{\tau}\cdot p_{0}(n_{\tau})+(1-\alpha_{\tau})\cdot \mathbb{I}_{[n_{\tau}=n_{\tau-1}]}\big]\bigg]=
\label{fixed-share-transition-usage}
\\
\underbrace{\big[\sum_{n_{\tau-1}\in\mathcal{N}}p(n_{\tau-1}|\bm{L}_{\mathcal{D}_{t}^{\tau-1}})\big]}_{\text{Sums to }1}\cdot \alpha_{\tau}\cdot p_{0}(n_{\tau})+(1-\alpha_{\tau})\cdot u_{\tau-1}^{n_{\tau}}=
\nonumber
\\
\alpha_{\tau}\cdot p_{0}(n_{\tau})+(1-\alpha_{\tau})\cdot u_{\tau-1}^{n_{\tau}}.
\label{fixed-share-transition-recomputing}
\end{eqnarray}
In line \eqref{fixed-share-transition-decomposition} we exploit the fact that the elements of $\mathcal{D}_{t}^{\tau-1}$ are strictly lower than $\tau$. In line \eqref{fixed-share-transition-usage} we use the definition \eqref{fixed-share-trans-prob} of the transition probability. The vector form of \eqref{fixed-share-transition-recomputing} is
\begin{equation}\bm{v}=(1-\alpha_{\tau})\cdot \bm{u}_{\tau-1}+\alpha_{\tau}\cdot \bm{p}_{0}.
\nonumber
\end{equation}

To derive $\bm{u}_{\tau}$ using $\bm{v}$ we consider two cases: $\tau\notin\mathcal{D}_{t}$ and $\tau\in\mathcal{D}_{t}$. In the first case  ${\bm{L}_{\mathcal{D}_{t}^{\tau-1}}\equiv\bm{L}_{\mathcal{D}_{t}^{\tau}}}$, which leads to $\bm{u}_{\tau}=\bm{v}$. If $\tau\in\mathcal{D}_{t}$, we have
\begin{eqnarray}
u_{\tau}^{n_{\tau}}=p(n_{\tau}|\bm{L}_{\mathcal{D}_{t}^{\tau}})\propto
p(\bm{L}_{\mathcal{D}_{t}^{\tau}}|n_{\tau})\cdot p(n_{\tau})=
\nonumber
\\
p(\bm{L}_{\mathcal{D}_{t}^{\tau-1}}|n_{\tau})\cdot p(\bm{l}_{\tau}|n_{\tau})\cdot p(n_{\tau})\propto p(n_{\tau}|\bm{L}_{\mathcal{D}_{t}^{\tau-1}})\cdot p(\bm{l}_{\tau}|n_{\tau})\propto
\nonumber
\\
p(n_{\tau}|\bm{L}_{\mathcal{D}_{t}^{\tau-1}})\cdot \exp(-\eta l_{\tau}^{n_{\tau}})=v^{n_{\tau}}\cdot \exp(-\eta l_{\tau}^{n_{\tau}}).
\label{fixed-share-update-observation}
\end{eqnarray}
The vector form of expression \eqref{fixed-share-update-observation} is $$\bm{u}_{\tau}\propto \bm{v}\cdot \exp(-\eta \cdot \bm{l}_{\tau}).$$

The pseudo-code of algorithm \ref{algorithm-delayed-fixed-share} ($\mathcal{G}_{\text{fs}}$) is shown below.  In addition to the notations of Algorithm \ref{algorithm-delayed-hedge} ($\mathcal{G}_{\text{base}}$), we assume that an extra function \textbf{GetSwitchProbability}() provides the value of the current switch probability $0\leq\alpha_{t}\leq 1$ (which may be chosen online).
 
\begin{algorithm}[htb!]
\SetKwInOut{Parameters}{Parameters}
\Parameters{Number of experts $N$, rounds $T$;\\ Learning rate $\eta>0$; Prior $\bm{p}_{0}\in\Delta(N)$.}
$\bm{u}\leftarrow$ List($[\bm{p}_{0}]$)\;
$\bm{\alpha}\leftarrow$ List($[1]$)\;
$\bm{l}\leftarrow$ List([Null])\;
$\bm{v}\leftarrow$ Null\;
\For{$t=1,2,\dots,T$}{
	$\bm{\alpha}.$append$\big($\textit{GetSwitchProbability}()$\big)$\;
    $\bm{u}$.append$\big((1-\bm{\alpha}[t])\cdot \bm{u}[t-1] + \bm{\alpha}[t]\cdot \bm{p}_{0}\big)$\; 
	$\bm{l}.$append(Null)\;
	\textit{Output}($\bm{u}[t]$)\;
    $\tau_{\min}\leftarrow\infty$\;
    \For{$(\tau, \bm{l}_{\tau})$ in GetRevealedLosses()}{
    	$\bm{l}[\tau]\leftarrow\bm{l}_{\tau}$\;
        $\tau_{\min}\leftarrow\min(\tau, \tau_{\min})$
    }
    \For{$\tau\in [\tau_{\min}; t]$}{
    	$\bm{v}\leftarrow (1-\bm{\alpha}[\tau])\cdot \bm{u}[\tau-1]+\bm{\alpha}[\tau]\cdot \bm{p}_{0}$\;
      \uIf{$\bm{l}[\tau]\neq$ Null}{
      	$\bm{u}[\tau]\leftarrow \bm{v}\cdot\exp(-\eta\bm{l}[\tau])$\;
        $\bm{u}[\tau]\leftarrow \bm{u}[\tau]/\|\bm{u}[\tau]\|_{1}$\;
        }
      \Else{
        $\bm{u}[\tau]\leftarrow \bm{v}$ \;
      }
    }
 }
\caption{Delayed Fixed Share ($\mathcal{G}_{\text{fs}}$)}
\label{algorithm-delayed-fixed-share}
\end{algorithm}

The \textbf{List}() class corresponds to the dynamic array. We assume that it has integer index $\{0,1,\dots,|List|-1\}$, supports the append-to-right operation in $O(1)$ time. We also assume that all operations to get or set list element (by index) require $O(1)$ time. At the end of each step $t$ list $\bm{u}$ keeps the posterior probabilities described above.

The \textbf{time complexity} of the algorithm is bounded by $${O\big(N\cdot(T+\sum_{t=1}^{T}D_{t})\big)}.$$ Indeed, at the steps $t$ such that $|d\mathcal{D}_{t}|=0$ the algorithm performs $O(N)$ operations. At other steps the algorithm performs 
$${O\big(N\cdot(t+1-\min\{\tau:\tau \in d\mathcal{D}_{t}\})\big)}$$ operations which are bounded by $O(N\cdot D_{\tau})$ for the minimal $\tau\in d\mathcal{D}_{t}$.

The \textbf{memory complexity} of the algorithm is $O(NT)$. However, it is possible to significantly reduce the memory complexity. Note that if for some $\tau, t$ we have $\mathcal{D}_{t}^{\tau}=\{1,2,\dots,t\}$, the weights $\bm{u}_{0},\dots,\bm{u}_{\tau-1}$ will never be used or recomputed after the step $t$. Thus, they become useless, and it is meaningful to keep only elements $\bm{u}_{t'}$ with $t'\geq \tau$ (same for lists $\bm{l}$ and $\bm{\alpha}$). The reduction will result in $O(N\cdot \max D_{t})$ memory complexity. We did not include the explained trick in the pseudo-code of Algorithm \ref{algorithm-delayed-fixed-share} in order to keep it simple.

\subsubsection{Regret Bound}

We use $\alpha_{t}=\frac{1}{t}$. We combine Corollary \ref{corollary-regret-sequence} with Lemma \ref{fixed-share-prob-bound} and obtain the regret bound for the algorithm with respect to any switching sequence ${{N}_{T}=(n_{1},n_{2},\dots,n_{T})}$:
\begin{eqnarray}
R_{T}({N}_{T})\leq H_{T}-L_{T}^{{N}_{T}}\leq 
\nonumber
\\
(K+1)\cdot \frac{\ln N+\ln T}{\eta}+\eta\frac{H^{2}}{8}T+
\eta \big[\frac{H^{2}\cdot \sum_{t=1}^{T}D_{t}}{4}\big],
\label{fixed-share-regret-bound}
\end{eqnarray}
where $K=|\{t:\,n_{t}\neq n_{t-1}\}|$ is the number of expert's switches in ${N}_{T}$.

Similar to the non-adaptive case, the algorithm requires choosing optimal learning rate $\eta$ in order to minimize the regret bound. The optimal $\eta$ should be chosen with respect to $T$ and $\sum_{t=1}^{T}D_{t}$.\footnote{It is also possible to minimize the bound w.r.t. particular number of switches $K$.} The following discussion is similar to the one at the end of the previous subsection \ref{hedge-delayed}.

If the learner knows $\sum_{t=1}^{T}D_{t}$ beforehand or $D_{t}$ is sampled from some distribution with known expectation $\mathbb{E}D$, the choice of 
\begin{equation}
\eta\propto \frac{1}{H}\sqrt{\frac{\ln T}{T+\sum_{t=1}^{T}D_{t}}}\
\quad
\text{or}
\quad
\eta\propto \frac{1}{H}\sqrt{\frac{\ln T}{T(1+\mathbb{E}D)}}
\nonumber
\end{equation}
respectively results in
$${O\big((K+2)\cdot \sqrt{(T+\sum_{t=1}^{T}D_{t})\cdot \ln T}\big)}$$ and $${O\big((K+2)\cdot \sqrt{T(1+\mathbb{E}D)\cdot \ln T}\big)}$$ (expected) regret bound with respect to any sequence with no more than $K$ expert switches.

If the sequence of delays is chosen by an adversary and unknown to the learner, then classical choice $\eta\propto \frac{\sqrt{\ln T}}{H\sqrt{T}}$ results in ${O[(K+2)\sqrt{T\ln T}(1+\overline{D})]}$ regret, where ${\overline{D}=\frac{1}{T}\sum_{t=1}^{T}D_{t}}$.

\section{Proof of Performance}
\label{sec-proofs}
In this section we prove Theorem \ref{theorem-main-loss-bound-D}. The proof is complicated, and we split it into two sequential parts. Firstly, we prove the bound \eqref{main-loss-bound-convex} for the non-delayed case in Subsection \ref{sec-proof-1}, i.e., $\{D_{t}\}_{t=1}^{T}=(0,\dots, 0)$. Secondly, we obtain the bound \eqref{main-loss-bound-convex} for arbitrary sequence of delays $\{D_{t}\}_{t=1}^{T}$ in Subsection \ref{sec-proof-2}.

\subsection{Bound for Non-delayed Setting}
\label{sec-proof-1}

We set $D_{t}\equiv 0$ for all $t$ and deal with the bound for algorithm $\mathcal{G}$ in this case. Note that $\mathcal{D}_{t}=\{1,2,\dots,t\}$ for all $t=1,2,\dots,T$ and $\bm{L}_{\mathcal{D}_{t}}=\bm{L}_{t}$.

\begin{proof}Recall that $w_{t}^{n_{t}}=p(n_{t}|\bm{L}_{\mathcal{D}_{t-1}})=p(n_{t}|\bm{L}_{t-1})$. Define the \textbf{mixloss} at the step $t$:
\begin{eqnarray}
m_{t}=-\frac{1}{\eta}\ln\big[\sum_{n_{t}\in\mathcal{N}}e^{-\eta l_{t}^{n_{t}}}\cdot w_{t}^{ n_{t}}\big]=
-\frac{1}{\eta}\ln\big[\sum_{n_{t}\in\mathcal{N}}e^{-\eta l_{t}^{n_{t}}}\cdot p( n_{t}|\bm{L}_{t-1})\big]=
\label{mixloss-definition}
\\
-\frac{1}{\eta}\ln\big[\sum_{n_{t}\in\mathcal{N}}Z\cdot p(\bm{l}_{t}|n_{t})\cdot p(n_{t}|\bm{L}_{t-1})\big]=
-\frac{1}{\eta}\ln Z-\frac{1}{\eta}\ln p(\bm{l}_{t}|\bm{L}_{t-1}).
\nonumber
\end{eqnarray}
Define the \textbf{cumulative mixloss} $M_{T}$ over the entire game:
\begin{eqnarray}
M_{T}=\sum_{t=1}^{T}m_{t}=-\frac{T}{\eta}\ln Z-\frac{1}{\eta}\ln\prod_{t=1}^{T}p(\bm{l}_{t}|\bm{L}_{t-1})=
\nonumber
\\
-\frac{T}{\eta}\ln Z-\frac{1}{\eta}\ln p(\bm{L}_{T})=-\frac{T}{\eta}\ln Z-\frac{1}{\eta}\ln \bigg[\sum_{N_{T}\in\mathcal{N}^{T}}p(N_{T})p(\bm{L}_{T}|N_{T})\bigg]=
\nonumber
\\
-\frac{T}{\eta}\ln Z-\frac{1}{\eta}\ln \bigg[\sum_{N_{T}\in\mathcal{N}^{T}}\big[p(N_{T})\prod_{t=1}^{T}p(\bm{l}_{t}|n_{t})\big]\bigg]=
\nonumber
\\
-\frac{1}{\eta}\ln\bigg[ \sum_{N_{T}\in\mathcal{N}^{T}}\big[p(N_{T})\prod_{t=1}^{T}\big(\underbrace{Z\cdot p(\bm{l}_{t}|n_{t})}_{e^{-\eta l_{t}^{{n}_{t}}}}\big)\big]\bigg]=
-\frac{1}{\eta}\ln \bigg[\mathbb{E}_{p({N}_{T})}\big[e^{-\eta L_{T}^{{N}_{T}}}\big]\bigg].
\nonumber
\end{eqnarray}
For all $t=1,\dots,T$ we apply Hoeffding's inequality \eqref{hoeffding} to a random variable 
$$X_{t}=l_{t}^{n_{t}}\in [0,H],$$ 
where $n_{t}\sim p(n_{t}|\bm{L}_{t-1})=w_{t}^{n_{t}}$:
$$\ln \sum_{n=1}^{N}w_{t}^{n}e^{-\eta l_{t}^{n_{t}}}\leq -\eta \langle \bm{w}_{t},\bm{l}_{t}\rangle+\eta^{2}\frac{H^{2}}{8},$$
which is equal to
\begin{equation}
h_{t}\leq m_{t}+\eta \frac{H^2}{8}.
\label{single-loss-bound}
\end{equation}
We sum \eqref{single-loss-bound} for $t=1,2,\dots,T$ and obtain
\begin{eqnarray}H_{T}\leq M_{T}+\eta \frac{H^2}{8}T=
-\frac{1}{\eta}\ln \bigg[\mathbb{E}_{p({N}_{T})}\big[e^{-\eta L_{T}^{{N}_{T}}}\big]\bigg]+\eta \frac{H^2}{8}T,
\label{non-delayed-loss-bound}
\end{eqnarray}
which finishes the proof.\end{proof}

\subsection{Bound for Delayed Setting}
\label{sec-proof-2}

In this section we consider the case of arbitrary sequence of delays $\{D_{t}\}_{t=1}^{T}$.

\begin{proof} We use the superscript $(\ldots)^{\mathcal{D}}$ to denote the variables obtained by algorithm $\mathcal{G}$ (for example, weights $\bm{w}_{t}^{\mathcal{D}}$, etc.) with the sequence of delays $\{D_{t}\}_{t=1}^{T}$. Our main idea is to prove that the weights $\bm{w}_{t}^{\mathcal{D}}$ are approximately equal to the weights $\bm{w}_{t}^{0}$ obtained by the algorithm in the game with the same experts but with no delays, i.e., $\{D_{t}\}_{t=1}^{T}=(0,\dots, 0)$. Thus, the losses $h_{t}^{\mathcal{D}}$ and $h_{t}^{0}$ will be approximately equal.

We divide this part of the proof of the theorem into two steps:

\vspace{2mm}

\noindent \textbf{Step 1. Proof for a simple probability distribution $p$}
\vspace{1mm}

To begin with, we consider the case of a simple Hidden Markov Model $p(\cdot)$. Let ${p(n_{1})=p_{0}(n_{1})}$ and ${p(n_{t+1}|n_{t})=\mathbb{I}_{[n_{t+1}=n_{t}]}}$ for all ${t=1,\dots,T-1}$. The corresponding algorithm is $\mathcal{G}_{\text{base}}=\mathcal{G}(p)$.

We compare the losses $H_{T}^{0}$ and $H_{T}^{\mathcal{D}}$ of algorithm $\mathcal{G}_{\text{base}}$ applied to the same data with no delays and with the given sequence of delays $\{D_{t}\}_{t=1}^{T}$ respectively.
\begin{eqnarray}
|H_{T}^{0}-H_{T}^{\mathcal{D}}|=|\sum_{t=1}^{T}h_{t}^{0}-\sum_{t=1}^{T}h_{t}^{\mathcal{D}}|\leq \sum_{t=1}^{T}|h_{t}^{0}-h_{t}^{\mathcal{D}}|= 
\nonumber
\\
\sum_{t=1}^{T}|\langle \bm{w}_{t}^{0}, \bm{l}_{t}\rangle - \langle \bm{w}_{t}^{\mathcal{D}}, \bm{l}_{t}\rangle |= \sum_{t=1}^{T}|\langle \bm{w}_{t}^{0}- \bm{w}_{t}^{\mathcal{D}}, \bm{l}_{t}\rangle | \leq 
\nonumber
\\
H\cdot \sum_{t=1}^{T}\max_{\mathcal{N}'\subset \mathcal{N}}\bigg[\sum_{n\in \mathcal{N}'}\big[(w_{t}^{n})^{0}-(w_{t}^{n})^{\mathcal{D}}\big]\bigg]
\label{unfinished-loss-diff-bound}
\end{eqnarray}

Note that ${(w_{t}^{n})^{0}\propto e^{-\eta L_{t-1}^{n}}}$ and ${(w_{t}^{n})^{\mathcal{D}}\propto e^{-\eta L_{\mathcal{D}_{t-1}}^{n}}}$ for all $t$. This means that ${(w_{t}^{n})^{0}\propto [(w_{t}^{n})^{\mathcal{D}}\cdot a^{n}]}$, where 
$$-\frac{1}{\eta}\ln (a^{n})=\sum_{\tau=1}^{t-1}l_{\tau}^{n}-\sum_{\tau\in\mathcal{D}_{t-1}}l_{\tau}^{n}\in \big[0, [(t-1)-|\mathcal{D}_{t-1}|]\cdot H\big].$$
Thus, according to Lemma \ref{lemma-change-bound}, we obtain the bound
$$\max_{\mathcal{N}'\subset \mathcal{N}}\bigg[\sum_{n\in \mathcal{N}'}\big[(w_{t}^{n})^{0}-(w_{t}^{n})^{\mathcal{D}}\big]\bigg]\leq \eta H\frac{t-1-|\mathcal{D}_{t-1}|}{4}$$
for all $t$. Combining it with \eqref{unfinished-loss-diff-bound} and Lemma \ref{lemma-delay-sum} we obtain:
$$|H_{T}^{0}-H_{T}^{\mathcal{D}}|\leq \eta\frac{H^{2}}{4}\big[\frac{T(T-1)}{2}-\sum_{t=1}^{T-1}|\mathcal{D}_{t}|\big]=\eta\frac{H^{2}}{4}\sum_{t=1}^{T}D_{t}.$$
The final step is to combine current result with the loss bound \eqref{non-delayed-loss-bound} for the non-delayed case:
\begin{eqnarray}
H_{T}^{\mathcal{D}}\leq H_{T}^{0}+|H_{T}^{0}-H_{T}^{\mathcal{D}}|\leq
\nonumber
\\
 -\frac{1}{\eta}\ln \bigg[\mathbb{E}_{p({N}_{T})}\big[e^{-\eta L_{T}^{{N}_{T}}}\big]\bigg]+\eta\frac{H^{2}}{8}T+\eta\frac{H^{2}}{4}\sum_{t=1}^{T}D_{t},
 \nonumber
\end{eqnarray}
and finish the proof of the bound for algorithm $\mathcal{G}_{\text{base}}$.

\vspace{2mm}

\noindent \textbf{Step 2. Proof for an arbitrary probability distribution $p$}
\vspace{1mm}

Now we consider the case of an arbitrary probability distribution $p$. From the given set of experts $\mathcal{N}$ we create a new super set $\mathcal{S}=\mathcal{N}^{T}$ of \textbf{super experts} $s\in\mathcal{S}$ ($\mathcal{S}$ for \textbf{S}uper). Each super expert $s$ corresponds to some sequence ${{N}_{T}=(n_{1},\dots,n_{T})\in \mathcal{N}^{T}}$ of basic experts $n\in\mathcal{N}$ of length $T$. We denote the $t$-th component of super expert $s$ by $n_{t}(s)$. We denote the full sequence of experts corresponding to $s$ by ${N}_{T}(s)$. We do not use subscript in order not to overburden the notation. The loss of super expert $s\in\mathcal{S}$ at the step $t$ is $l_{t}^{n_{t}(s)}$, where $l_{t}^{n}$ (for $n\in\mathcal{N}$) are the losses of basic experts. We use $E(\bm{L}_{\mathcal{D}_t})$ and $E(\bm{l}_{t})$ to denote all the super experts' losses at the steps $\mathcal{D}_{t}$ and $t$ respectively ($E$ for \textbf{E}nhanced).

We define the probability model for hidden super experts. In order not to confuse the reader with notation, we use capital $P$ (instead of regular $p$) to denote all probabilities related to super experts. Let $P(s_{1})=P_{0}(s_{1})=p({N}_{T}(s_{1}))$ and 
$$P(s_{t+1}|s_{t})=[s_{t+1}=s_{t}].$$
The described probability distribution corresponds to algorithm $\mathcal{G}_{\text{base}}(P)$ for super experts $s\in\mathcal{S}$ and initial distribution $P_{0}$. We have $s_{1}=s_{2}=\dots=s_{T}$ w.p. 1.

The main idea is to show that the losses of algorithm $\mathcal{G}_{\text{base}}$ are equal to the losses of algorithm $\mathcal{G}(p)$. In order to prove this, we show that for all $t$ the sum of the weights 
$$\widehat{w}_{t}^{n}\stackrel{\mbox{def}}{=}\sum_{s|n_{t}(s)=n}w_{t}^{s}$$
in algorithm $\mathcal{G}_{\text{base}}(P)$ is equal to $w_{t}^{n}$ in algorithm $\mathcal{G}(p)$. This sum corresponds to the weight that is allocated to the base expert $n\in\mathcal{N}$ as a part of the super experts' weight allocation for step $t$. We perform several calculations:

\begin{eqnarray}
\widehat{w}_{t}^{n}=\sum_{s|n_{t}(s)=n}w_{t}^{s}=
\nonumber
\\
\sum_{s|n_{t}(s)=n}P\big(s|E(\bm{L}_{\mathcal{D}_{t-1}})\big)=
\sum_{s|n_{t}(s)=n}\frac{P\big(E(\bm{L}_{\mathcal{D}_{t-1}}|s)\big)P(s)}{P\big(E(\bm{L}_{\mathcal{D}_{t-1}})\big)}=
\nonumber
\end{eqnarray}
Now note that $P(s)=P_{0}(s)=p\big({N}_{T}(s)\big),$ and 
\begin{eqnarray}
P\big(E(\bm{L}_{\mathcal{D}_{t-1}})|s)=\prod_{\tau\in\mathcal{D}_{t-1}}P(E(\bm{l}_{\tau})|s)=
\nonumber
\\
\prod_{\tau\in\mathcal{D}_{t-1}}p\big(\bm{l}_{\tau}|n_{\tau}(s)\big)=
p\big(\bm{L}_{\mathcal{D}_{t-1}}|{N}_{T}(s)\big)=
p\big(\bm{L}_{\mathcal{D}_{t-1}}|{N}_{\mathcal{D}_{t-1}}(s)\big).
\nonumber
\end{eqnarray}
Thus, we continue computations:
\begin{eqnarray}
\sum_{s|n_{t}(s)=n}w_{t}^{s}=
\sum_{s|n_{t}(s)=n}\frac{p\big(\bm{L}_{\mathcal{D}_{t-1}}|{N}_{\mathcal{D}_{t-1}}(s)\big)\cdot p\big({N}_{T}(s)\big)}{P\big(E(\bm{L}_{\mathcal{D}_{t-1}})\big)}=
\nonumber
\\
\sum_{{N}_{T}|n_{t}=n}\frac{p(\bm{L}_{\mathcal{D}_{t-1}}|{N}_{\mathcal{D}_{t-1}})\cdot p({N}_{T})}{P\big(E(\bm{L}_{\mathcal{D}_{t-1}})\big)}=
\nonumber
\\
\sum_{{N}_{T}|n_{t}=n}\frac{p(\bm{L}_{\mathcal{D}_{t-1}}|{N}_{\mathcal{D}_{t-1}})\cdot p({N}_{\mathcal{D}_{t-1}})\cdot p({N}_{T}|{N}_{\mathcal{D}_{t-1}})}{P\big(E(\bm{L}_{\mathcal{D}_{t-1}})\big)}=
\nonumber
\\
\sum_{{N}_{T}|n_{t}=n}\frac{p(\bm{L}_{\mathcal{D}_{t-1}})\cdot p({N}_{\mathcal{D}_{t-1}}|\bm{L}_{\mathcal{D}_{t-1}})\cdot p({N}_{T}|{N}_{\mathcal{D}_{t-1}})}{P\big(E(\bm{L}_{\mathcal{D}_{t-1}})\big)}=
\nonumber
\\
\frac{p(\bm{L}_{\mathcal{D}_{t-1}})}{P\big(E(\bm{L}_{\mathcal{D}_{t-1}})\big)}\sum_{{N}_{T}|n_{t}=n}p({N}_{T}|\bm{L}_{\mathcal{D}_{t-1}})=
\nonumber
\\
\frac{p(\bm{L}_{\mathcal{D}_{t-1}})}{P\big(E(\bm{L}_{\mathcal{D}_{t-1}})\big)}p(n_{t}=n|\bm{L}_{\mathcal{D}_{t-1}})=\frac{p(\bm{L}_{\mathcal{D}_{t-1}})}{P\big(E(\bm{L}_{\mathcal{D}_{t-1}})\big)}w_{t}^{n}
\nonumber
\end{eqnarray}
Let us show that the value of $n$-independent normalizing constant $\frac{p(\bm{L}_{\mathcal{D}_{t-1}})}{P\big(E(\bm{L}_{\mathcal{D}_{t-1}})\big)}$ is equal to $1$. Indeed, 

\begin{eqnarray}1=\sum_{s\in\mathcal{S}}w_{t}^{s}=
\sum_{n\in\mathcal{N}}\widehat{w}_{t}^{n}=\frac{p(\bm{L}_{\mathcal{D}_{t-1}})}{P\big(E(\bm{L}_{\mathcal{D}_{t-1}})\big)}\sum_{n\in\mathcal{N}}w_{t}^{n}=\frac{p(\bm{L}_{\mathcal{D}_{t-1}})}{P\big(E(\bm{L}_{\mathcal{D}_{t-1}})\big)}.
\nonumber
\end{eqnarray}

We conclude that $\widehat{w}_{t}^{n}=w_{t}^{n}$ for all $t=1,\dots T$ and $n\in\mathcal{N}$. Thus, we proved that algorithms $\mathcal{G}_{\text{base}}(P)$ and $\mathcal{G}(p)$ have exactly the same losses. Let $H_{T}$ be the cumulative loss of these algorithms. Then, by using part 1 of the proof of the theorem we conclude:

\begin{eqnarray}
H_{T}\leq 
-\frac{1}{\eta}\ln \bigg[\mathbb{E}_{P(s)}\big[e^{-\eta L_{T}^{s}}\big]\bigg]+\eta\frac{H^{2}}{8}T+\eta\frac{H^{2}}{4}\sum_{t=1}^{T}D_{t}=
\nonumber
\\
-\frac{1}{\eta}\ln \bigg[\mathbb{E}_{p({N}_{T})}\big[e^{-\eta L_{T}^{{N}_{T}}}\big]\bigg]+\eta\frac{H^{2}}{8}T+\eta\frac{H^{2}}{4}\sum_{t=1}^{T}D_{t}
\nonumber
\end{eqnarray}
and finish the proof.\end{proof}

\section{Experiments}
\label{sec-experiments}

We empirically compare developed non-replicating algorithm \ref{algorithm-delayed-hedge} ($\mathcal{G}_{\text{base}}$) and algorithm \ref{algorithm-delayed-fixed-share} ($\mathcal{G}_{\text{fs}}$) with their analogous replicated ones obtained from non-delayed Hedge and Fixed Share by using meta-algorithm BOLD \cite{joulani2013online}. 

To begin with, we recall the main idea of replicating meta-algorithm BOLD. For the sequence of the delays $\{D_{t}\}_{t=1}^{T}$ meta-algorithm BOLD splits the time line into disjoint subsequences. Each subsequence $\{t_{1}<\dots<t_{S}\}$ satisfies ${t_{s}+D_{t_{s}}<t_{s+1}}$, so it is possible to run an independent copy of some non-delayed algorithm $\mathcal{A}$ on the subsequence. For simplicity we assume that all the delays $D_{t}$ are known to the BOLD beforehand. Thus, the meta-algorithm can choose the optimal learning rate for each copy of $\mathcal{A}$ depending on the length of the corresponding subsequence. For more details about algorithm \textbf{BOLD} please refer to the original paper \cite{joulani2013online}.

We use BOLD$(\mathcal{H}_{\text{base}})$ and BOLD$(\mathcal{H}_{\text{fs}})$ to denote replicated Hedge and Fixed Share respectively.

We conduct the experiments on the \textbf{artificial data}. The artificial data is widely used to illustrate  the performance of the Hedge-like algorithms (see \cite{HeW98,BoW2002,erven2011adaptive,de2014follow}).

To generate the data we use schemes similar to the ones from \cite{erven2011adaptive}. In all our experiments we set $N=4$ experts and use binary losses, i.e. $\{0, 1\}$. Thus, we set $H=1$. The length of the game is $T=10 000$.

We sample $l_{t}^{n}\sim \text{Bernoulli}(q^{n})$, i.i.d. random variables for all ${n=1,2,3,4}$ and ${t=1,2,\dots T}$. We use two variants of $\bm{q}$: the first one is $${\bm{q}_{1}=[q^{1}, q^{2}, q^{3}, q^{4}]=[0.35, 0.4, 0.45, 0.5]},$$
when all the experts suffer approximately similar losses; the second one,
$$\bm{q}_{2}=[q^{1}, q^{2}, q^{3}, q^{4}]=[0.2, 0.4, 0.5, 0.7],$$
when experts differ a lot.

The sequence of delays is random. Each $D_{t}$ is sampled from Poisson distribution with known to the learner mean $\lambda$, i.e. $D_{t}\sim\text{Poisson}(\lambda)$.

Note that all the computational results are \textbf{averaged} on ${R=250}$ random realizations of data (losses, delays) for all considered parameters ($\bm{q}, \lambda$).

\subsection{Experiments with Hedge}
\label{sec-exp-hedge}
In this subsection we compare non-replicating algorithm $\mathcal{G}_{\text{base}}$ and replicating algorithm BOLD$(\mathcal{H}_{\text{base}})$.

For each copy of $\mathcal{H}_{\text{base}}$ started by BOLD on the subsequence of length $S$ we use its optimal learning rate
\begin{equation}\eta^{*}=\argmin_{\eta>0}\big[\frac{\ln N}{\eta}+\eta\frac{H^{2}S}{8}\big]=\frac{2}{H}\sqrt{\frac{2\ln N}{S}}.
\label{learning-rate-subprocess}
\end{equation}

Note that BOLD($\mathcal{H}_{\text{base}}$) runs roughly $\approx[1+\mathbb{E}D]$ copies of $\mathcal{H}_{\text{base}}$, each of length $\approx \frac{T}{[\mathbb{E}D+1]}$ with learning rate\footnote{In the case $D_{t}\equiv D=\mathbb{E}D$ for all $t$, all approximations become equalities.} $$\eta^{\mathcal{H}_{\text{base}}}\approx \frac{2}{H}\sqrt{\frac{2\ln N}{T}(1+\mathbb{E}D)}.$$ 
Thus, in order to equalize the learning speed of $\mathcal{G}_{\text{base}}$ and BOLD($\mathcal{H}_{\text{base}}$), it is fair to assign $[1+\mathbb{E}D]$ times lower learning rate 
\begin{equation}
\eta^{\mathcal{G}_{\text{base}}}=[1+\mathbb{E}D]^{-1}\cdot \eta^{\mathcal{H}_{\text{base}}}=\frac{2}{H}\sqrt{\frac{2\ln N}{T(1+\mathbb{E}D)}}
\label{learning-rate-main}
\end{equation}
to algorithm $\mathcal{G}_{\text{base}}$. The usage of such $\eta$ leads to $O(\sqrt{T(1+\mathbb{E}D)})$ regret bound (see Subsection \ref{hedge-delayed}).

For integer values of $\lambda=\mathbb{E}D\in [0, 250]$, we compare the total regret $R_{T}$ of $\mathcal{G}_{\text{base}}$ and BOLD($\mathcal{H}_{\text{base}}$) with respect to the best expert. The resulting empirical dependence is shown in Figures \ref{figure:hedge-similar}, \ref{figure:hedge-distant} for losses generated with the use of $\bm{q}_{1}$ and $\bm{q}_{2}$ respectively.



\begin{figure}[!htb]
\centering     
\subfigure[\textbf{Hedge:} similar experts.]{\label{figure:hedge-similar}\includegraphics[width=0.48\textwidth]{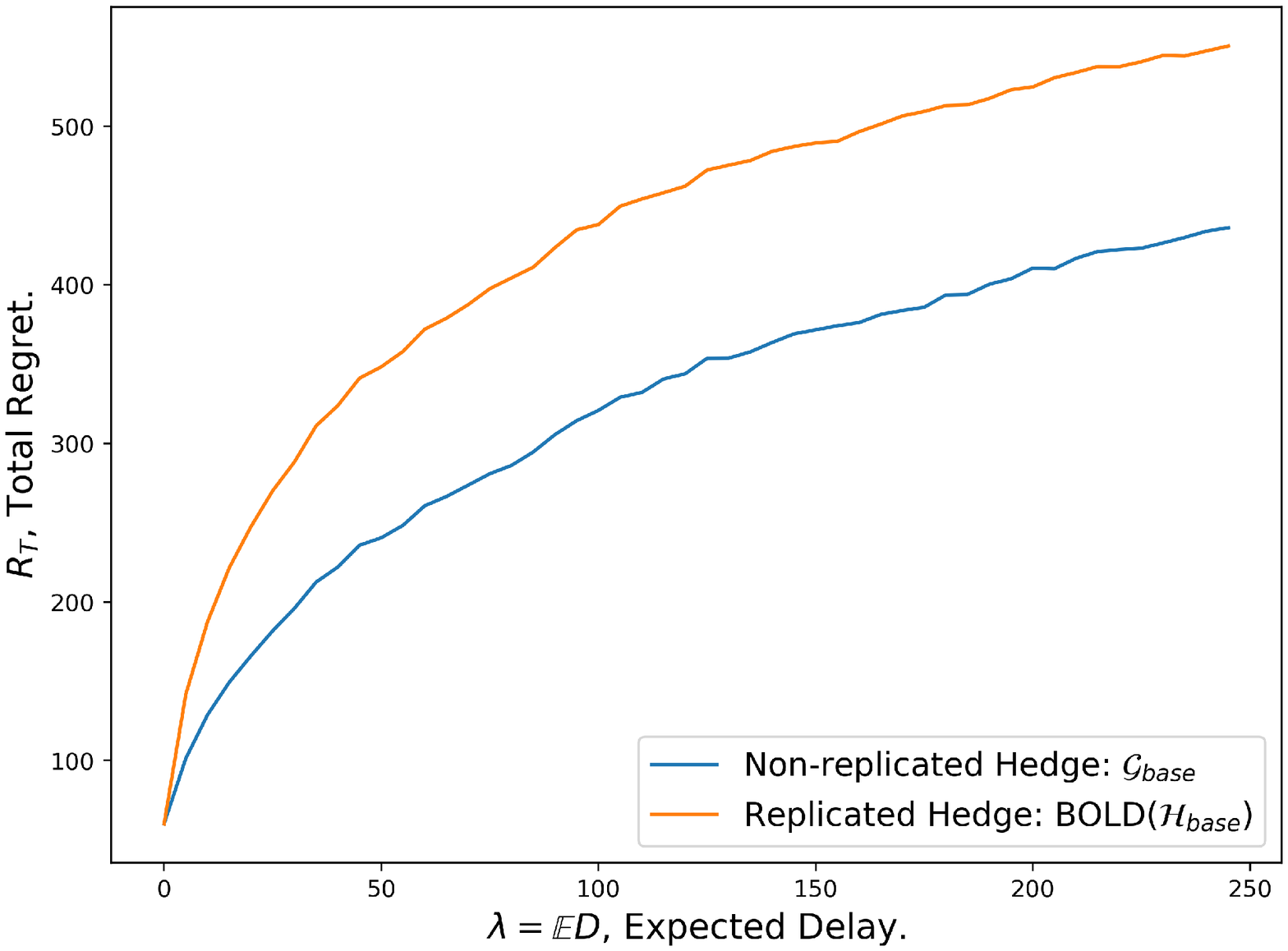}}
\subfigure[\textbf{Hedge:} diverse experts.]{\label{figure:hedge-distant}\includegraphics[width=0.48\textwidth]{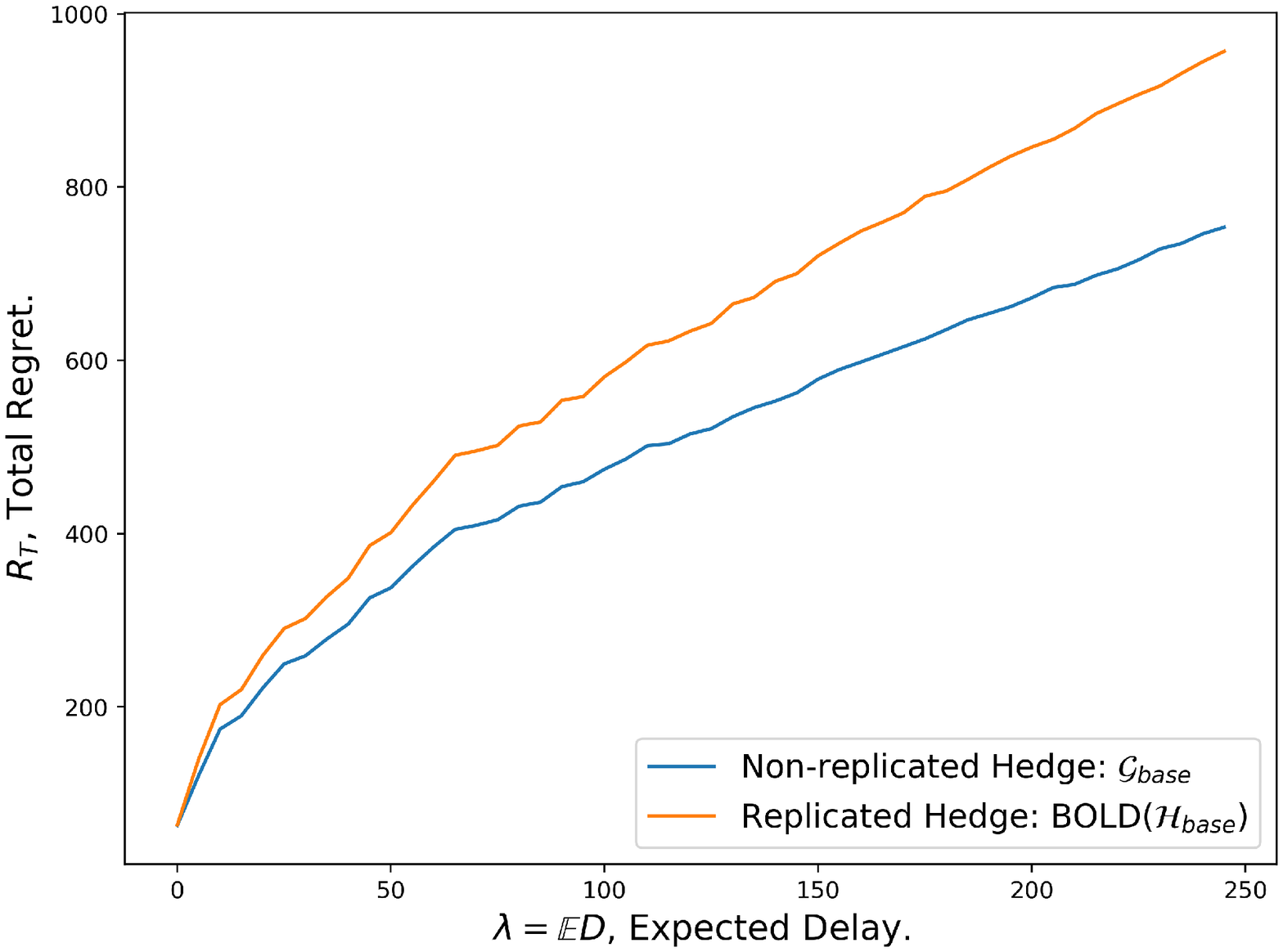}}
\caption{Total regret $R_{T}$ w.r.t. the best expert as a function of the Expected Delay $\lambda=\mathbb{E}D$ for non-replicated $\mathcal{G}_{\text{base}}$ and replicated BOLD$(\mathcal{H}_{\text{base}})$.}
\end{figure}

We discuss the results in Section \ref{sec-experiments-discussion} below. 

\subsection{Experiments with Fixed Share}

In this subsection we compare non-replicating algorithm $\mathcal{G}_{\text{fs}}$ and replicating algorithm BOLD$(\mathcal{H}_{\text{fs}})$.

We set $K=10$ switches and generate datasets which have $K$ switches of the best expert. To create such a dataset, we randomly select $K$ time steps ${t_{1}<t_{2}<\dots<T_{K}}$. On each $k$-th segment $[t_{k}+1,t_{k+1}]$ (for $k=0,1,\dots, K$ and $t_{0}=0, t_{K+1}=T$) we fix random permutation $\sigma_{k}$ on the set of $N$ elements and sample the losses of expert $n=1,2,3,4$ from Bernoulli$(q^{\sigma_{k}(n)})$. Thus, we obtain the sequence of losses which has up to $K$ switches of the best expert. We also assume that the learner does not know $K$ in advance.





In order not to overburden the reader, we use the same learning rates as in the previous subsection. For every copy of $\mathcal{H}_{\text{fs}}$ generated by BOLD, the learning rate is defined by \eqref{learning-rate-subprocess}. For the $\mathcal{G}_{\text{fs}}$ the learning rate is given by \eqref{learning-rate-main}.






\begin{figure}[!htb]
\centering     
\subfigure[\textbf{Fixed Share:} similar experts.]{\label{figure:fixed-share-similar}\includegraphics[width=0.48\textwidth]{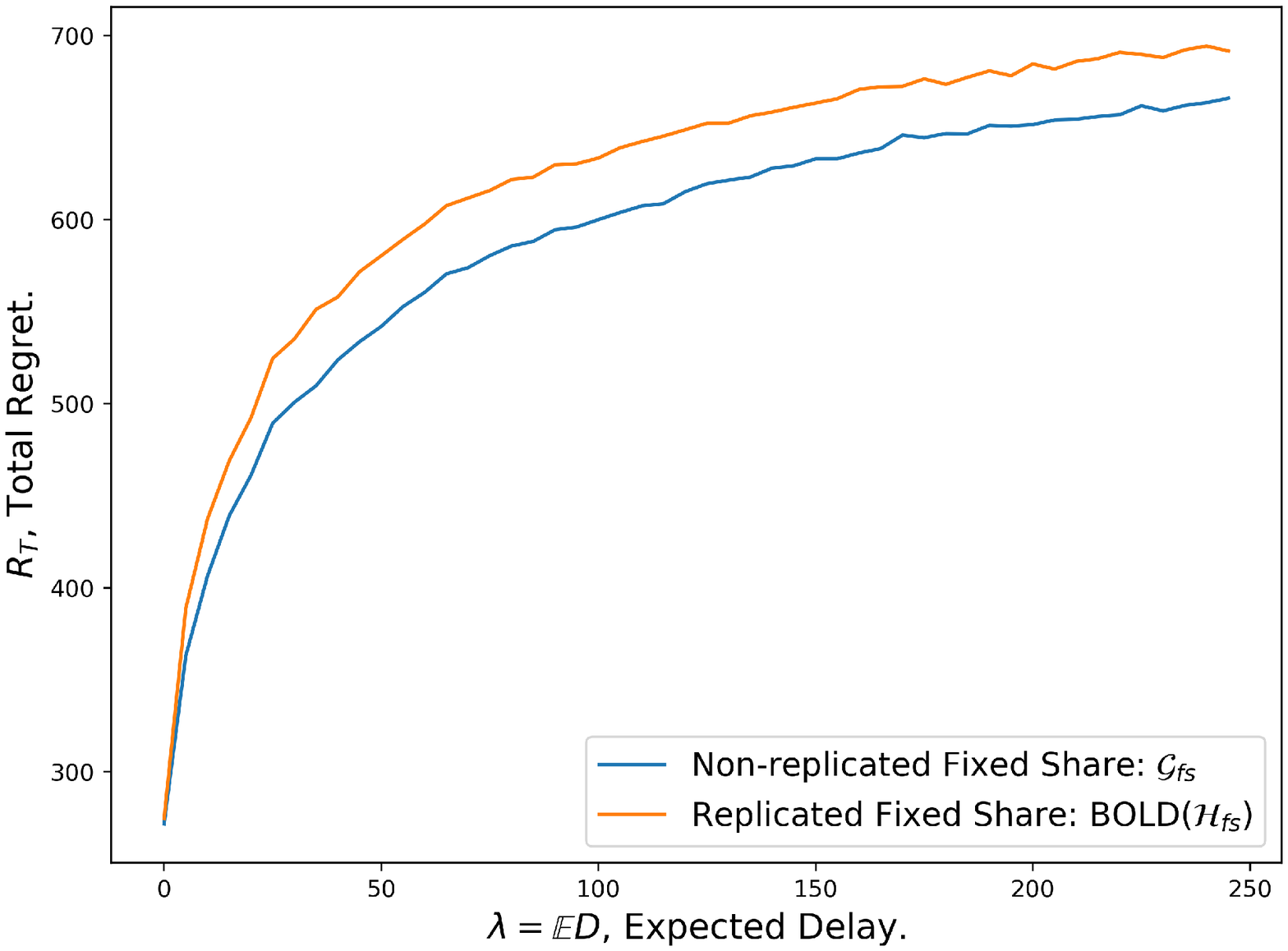}}
\subfigure[\textbf{Fixed Share:} diverse experts.]{\label{figure:fixed-share-distant}\includegraphics[width=0.48\textwidth]{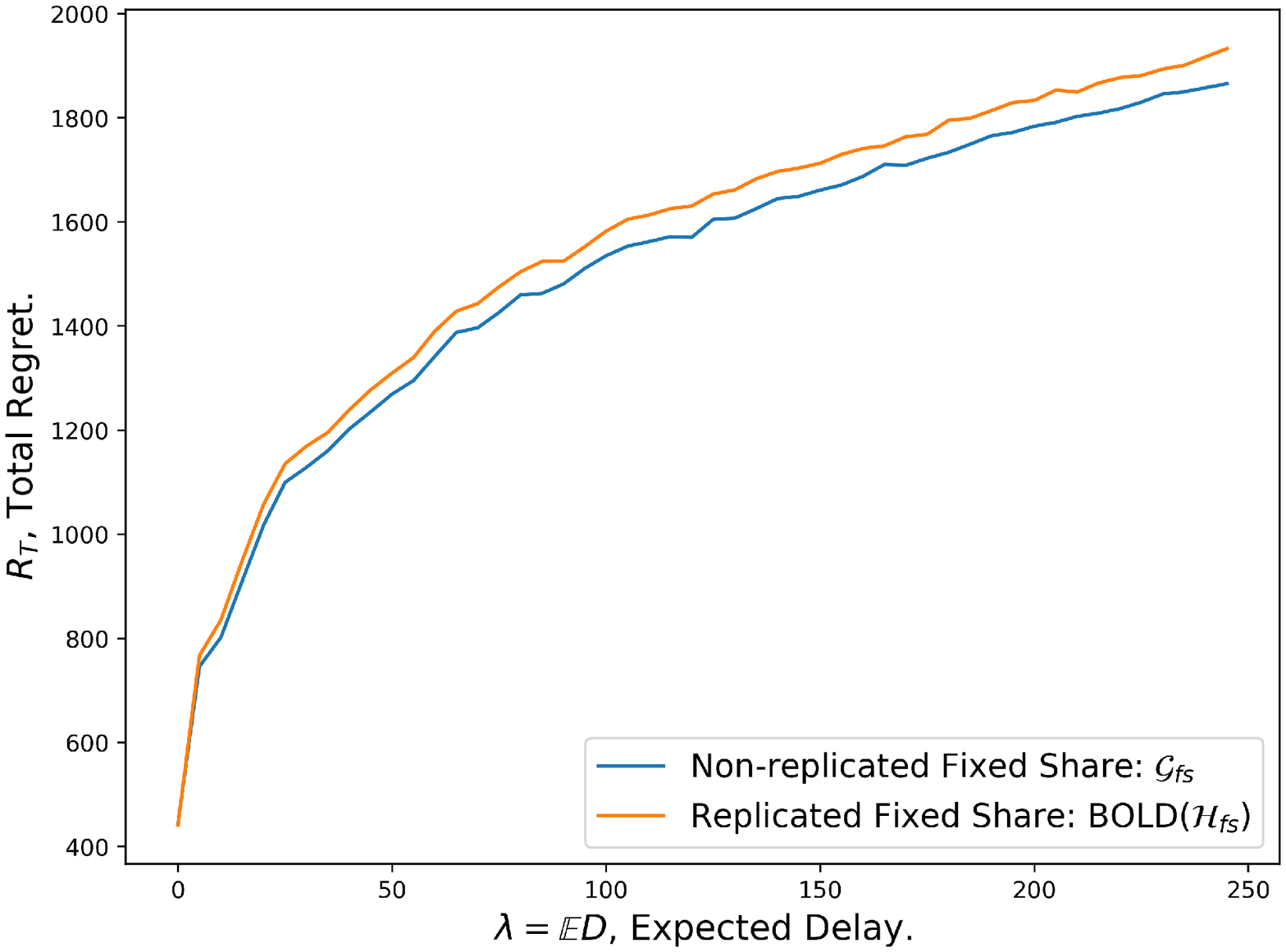}}
\caption{Total regret $R_{T}$ w.r.t. the best sequence of experts with no more than $K=10$ shifts as a function of the Expected Delay $\lambda=\mathbb{E}D$ for non-replicated $\mathcal{G}_{\text{fs}}$ and replicated BOLD$(\mathcal{H}_{\text{fs}})$.}
\end{figure}
We discuss the results in Section \ref{sec-experiments-discussion} below. 

\subsection{Discussion}
\label{sec-experiments-discussion}

In all Figures \ref{figure:hedge-similar}, \ref{figure:hedge-distant}, \ref{figure:fixed-share-similar}, \ref{figure:fixed-share-distant} we see that the non-replicated algorithms outperform their corresponding replicating opponents.

For Hedge algorithm from Figures \ref{figure:hedge-similar}, \ref{figure:hedge-distant} we also conclude that with the increase of the expected delay $\mathbb{E}D$ the gap between performance of non-replicating Hedge ($\mathcal{G}_{\text{base}}$) and replicating BOLD$(\mathcal{H}_{base})$ increases. Indeed, the bigger the expected delay is, the more infrequent the separate learning processes generated by BOLD become and the less data they see. Nevertheless, while each base copy of $\mathcal{H}_{\text{base}}$ runs on $\approx \frac{1}{1+\mathbb{E}D}$ times less data than the non-replicated $\mathcal{G}_{\text{base}}$, it uses $\approx(1+\mathbb{E}D)$ times higher learning rate, which should balance the learning speed with the non-replicated $\mathcal{G}_{\text{base}}$.

Note that Hedge is equal to Online Mirror Descent (OMD) with Entropic Regularization (see e.g. \cite{shalev2012online}). OMD runs Online Gradient Descent (OGD)
$$\bm{x}_{t}\leftarrow \bm{x}_{t-1}-\eta\cdot \bm{l}_{t}$$
in the mirrored space $\mathbb{R}^{N}$ and after each gradient step transforms the mirrored weight $\bm{x}_{t}$ into primal weight 
$$\bm{w}_{t+1}=\text{SoftMax}(\bm{x}_{t})=\bigg[\frac{e^{x^{1}_{t}}}{\sum_{n=1}^{N}e^{x^{n}_{t}}},\dots,\frac{e^{x^{N}_{t}}}{\sum_{n=1}^{N}e^{x^{n}_{t}}}\bigg],$$
so that $\bm{w}_{t+1}\in \Delta(\mathcal{N})$ is the decision of the algorithm on weight allocation.

In the case of i.i.d. experts losses, the mirrored estimates of $\bm{x}_{t}$ for both $\mathcal{G}_{\text{base}}$ on $t$ observations and $\mathcal{H}_{\text{base}}$ on $\approx\frac{1}{1+\mathbb{E}D}t$ observations have the same expectation. Indeed,
\begin{eqnarray}\mathbb{E}(\bm{x}_{t}^{\mathcal{G}_{\text{base}}})=\mathbb{E}\sum_{\tau=1}^{t}\big[-\eta^{\mathcal{G}_{\text{base}}}\cdot \bm{l}_{\tau}\big]=-\eta^{\mathcal{G}_{\text{base}}}\cdot\sum_{\tau=1}^{t}\mathbb{E} \bm{l}_{\tau}=-\eta^{\mathcal{G}_{\text{base}}}\cdot\big( t\cdot\mathbb{E}\bm{l}\big)=
\label{mirrored-expectation-base}
\\
 -\big(\frac{\eta^{\mathcal{H}_{\text{base}}}}{1+\mathbb{E}D})\cdot \big( t\cdot\mathbb{E}\bm{l}\big)=-\big(\eta^{\mathcal{H}_{\text{base}}})\cdot \underbrace{\big( \frac{t}{1+\mathbb{E}D}\cdot\mathbb{E}\bm{l}\big)}_{\approx \sum\limits_{\tau\in \text{SP}(t)}\mathbb{E}\bm{l}_{\tau}}\approx
\label{mirrored-expectation-intermediate}
 \\
 \sum\limits_{\tau\in \text{SP}(t)}\big[-\eta^{\mathcal{H}_{\text{base}}}\cdot\mathbb{E}\bm{l}_{\tau}\big]=\mathbb{E}(\bm{x}_{t}^{\text{BOLD}}),
\label{mirrored-expectation-bold}
\end{eqnarray}
where we use $\text{SP}(t)$ to denote the set of all time steps $\tau\leq t$ included in the \textbf{s}eparate learning \textbf{p}rocess (generated by BOLD) that is used at the step $t$. In the transition between lines \eqref{mirrored-expectation-base} and \eqref{mirrored-expectation-intermediate} we use definition \eqref{learning-rate-main} of the learning rates. In line \eqref{mirrored-expectation-bold} we note that the size of the set $\text{SP}(t)$ is $\approx \frac{t}{1+\mathbb{E}D}$.

Same as in \eqref{mirrored-expectation-base}-\eqref{mirrored-expectation-bold}, we compare the co-variance matrices of the estimates of the mirrored estimates of $\bm{x}_{t}$ obtained by $\mathcal{G}_{\text{base}}$ and BOLD. Again, using the i.i.d. assumption we derive

\begin{eqnarray}\mathbb{V}(\bm{x}_{t}^{\mathcal{G}_{\text{base}}})=
\mathbb{V}\sum_{\tau=1}^{t}\big[-\eta^{\mathcal{G}_{\text{base}}}\cdot \bm{l}_{\tau}\big]=
\big(\eta^{\mathcal{G}_{\text{base}}}\big)^{2}\cdot\sum_{\tau=1}^{t}\mathbb{V} \bm{l}_{\tau}=
\nonumber
\\
\big(\eta^{\mathcal{G}_{\text{base}}}\big)^{2}\cdot\big( t\cdot\mathbb{V}\bm{l}\big)=
 \big(\frac{\eta^{\mathcal{H}_{\text{base}}}}{1+\mathbb{E}D}\big)^{2}\cdot \big( t\cdot\mathbb{V}\bm{l}\big)=
 \nonumber
 \\
 \frac{\big(\eta^{\mathcal{H}_{\text{base}}})^{2}}{1+\mathbb{E}D}\cdot \underbrace{\big( \frac{t}{1+\mathbb{E}D}\cdot\mathbb{V}\bm{l}\big)}_{\approx \sum\limits_{\tau\in \text{SP}(t)}\mathbb{V}\bm{l}_{\tau}}\approx 
 \frac{1}{1+\mathbb{E}D}\cdot \sum\limits_{\tau\in \text{SP}(t)}\big[(\eta^{\mathcal{H}_{\text{base}}})^{2}\cdot\mathbb{V}\bm{l}_{\tau}\big]=
 \nonumber
 \\
 (1+\mathbb{E}D)^{-1}\cdot \mathbb{V}(\bm{x}_{t}^{\text{BOLD}}).
\nonumber
\end{eqnarray}
Note that all the described co-variance matrices are diagonal because we consider the case when the losses of different experts are independent.

We see that while the expectation of the estimates of the mirrored weight $\bm{x}_{t}$ is equal for both non-replicated $\mathcal{G}_{\text{base}}$ and replicated BOLD$(\mathcal{H}_{\text{base}})$, the variance differs $1+\mathbb{E}D$ times. In particular, this means that the distribution of mirrored weights $\bm{x}_{t}$ for these two algorithms differs. The mirrored weight of $\mathcal{G}_{\text{base}}$ is more robust than the corresponding weight of a copy of $\mathcal{H}_{\text{base}}$. As we see from the experiments, these robustness of mirrored weight $\bm{x}_{t}$ also leads to robustness of the primal weights $\bm{w}_{t+1}$ and results in better performance.

If the data does not behave like stochastic, e.g. is maximally adversarial, the above argument obviously does not work, and the replicated algorithms may outperform their non-replicated analogues.

We also note another important advantage of the non-replicating algorithms. They are more \textbf{interpretable} than their replicated analogues. The weights obtained by non-replicated algorithms are smooth (thus, more interpretable), whereas the weights of replicated algorithms are smooth only inside every domain of the independent learning subprocess.

To illustrate this, we plot the weight evolution of experts obtained by $\mathcal{G}_{\text{fs}}$ and BOLD$(\mathcal{H}_{\text{fs}})$ in a single experiment with $\mathbb{E}D=40$ and $K=10$ experts' switches with experts' losses generated using $\bm{q}_{2}$. The weight evolution on time interval $(4200, 4300)$ is shown in Figures \ref{figure:evolution-non-replicated} and \ref{figure:evolution-replicated}. One may clearly see that the experts' weights of replicated algorithm in Figure \ref{figure:evolution-replicated} look like uninterpretable noise (because the weights of separate learning processes significantly differ). 

\begin{figure}[!htb]
\centering     
\subfigure[\textbf{Non-replicated} Fixed Share.]{\label{figure:evolution-non-replicated}\includegraphics[width=0.48\textwidth]{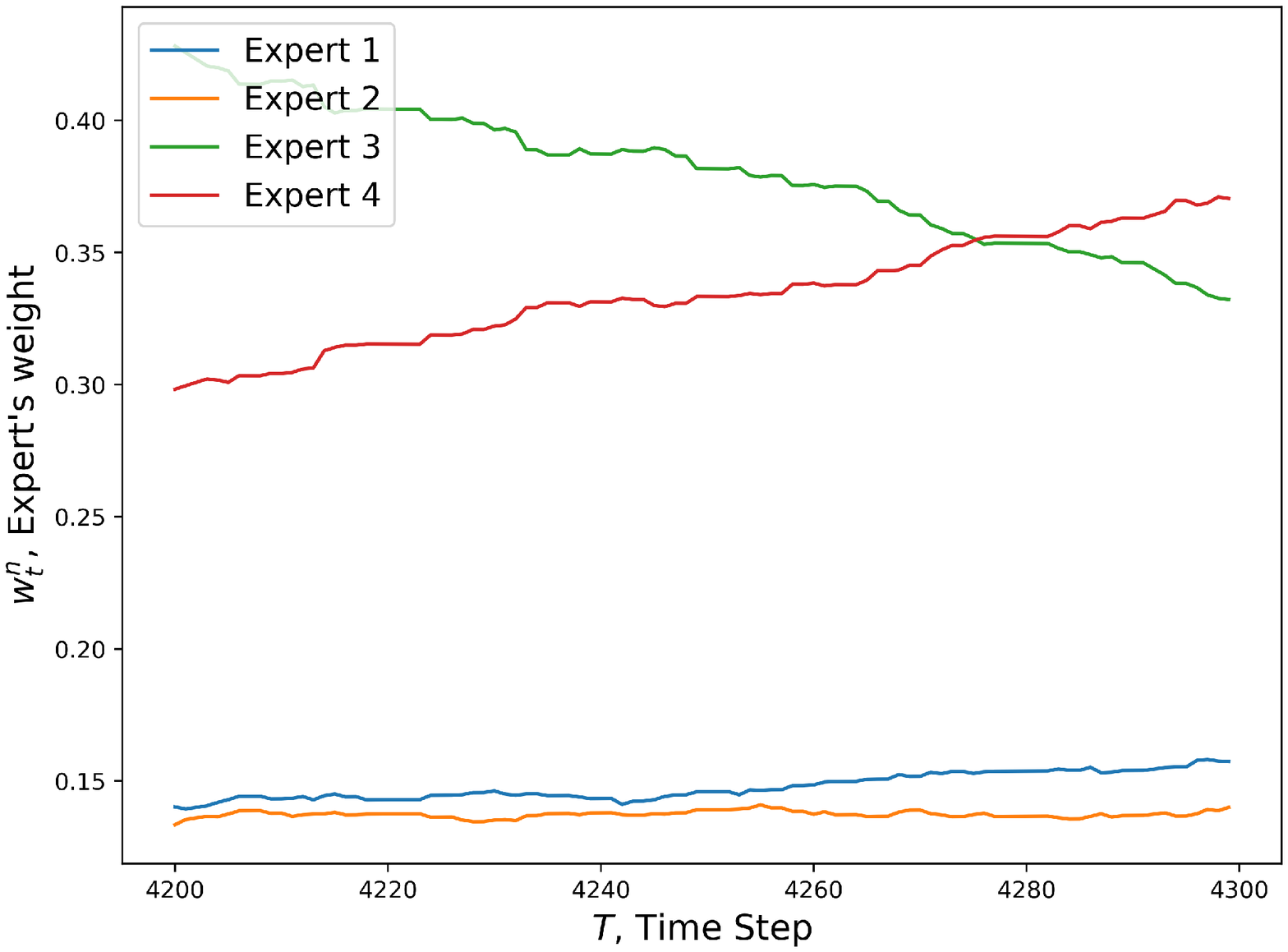}}
\subfigure[\textbf{Replicated} Fixed Share.]{\label{figure:evolution-replicated}\includegraphics[width=0.48\textwidth]{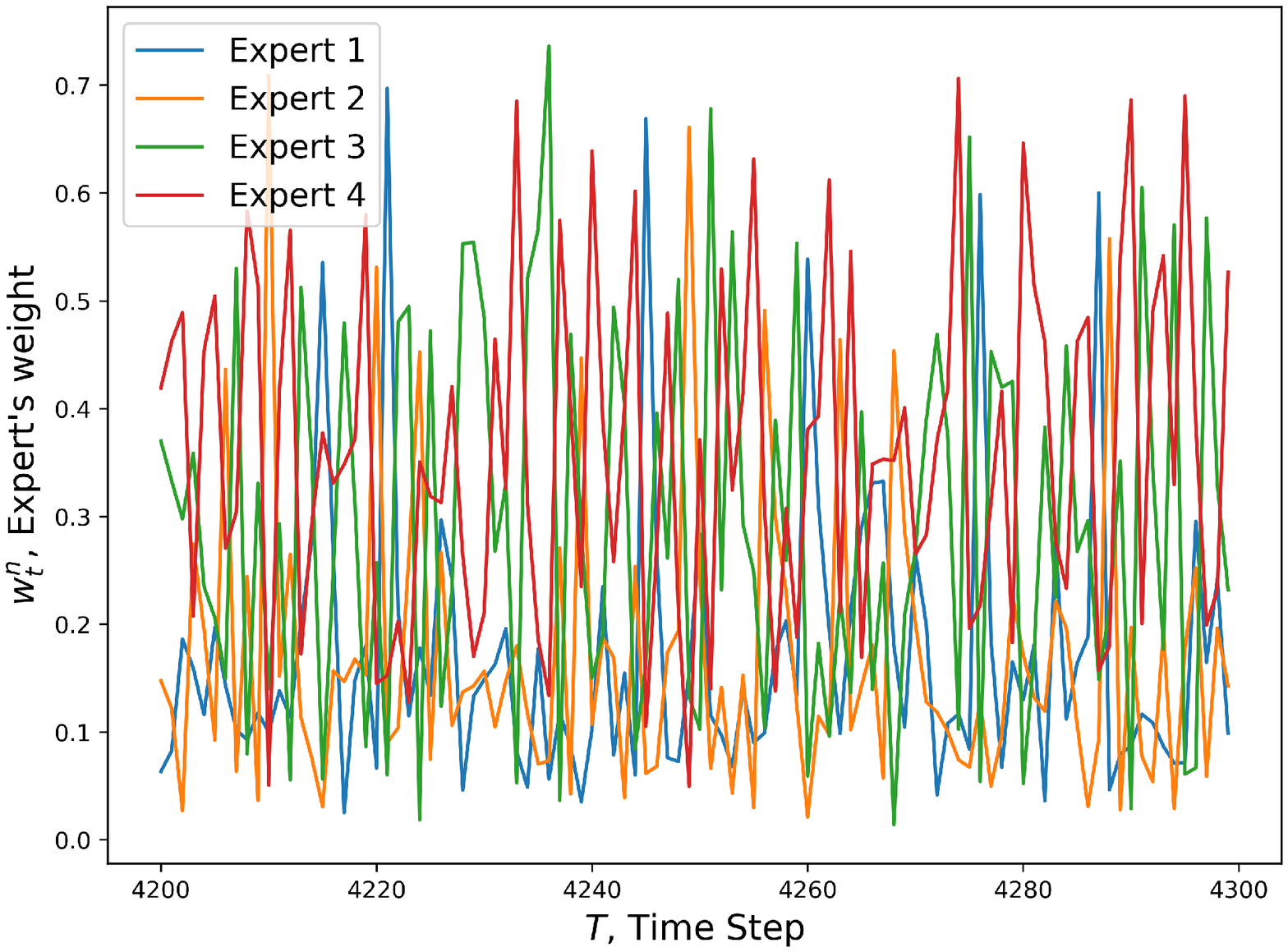}}
\caption{Evolution of weights of non-replicated $\mathcal{G}_{\text{fs}}$ and replicated BOLD$(\mathcal{H}_{\text{fs}})$ on the same data during time steps (4200, 4300).}
\end{figure}

We also attach the plot of the full weight evolution of the non-replicated algorithm in Figure \ref{figure:evolution-non-replicated-all}.

\begin{figure}
    \centering
    \includegraphics[width=0.98\textwidth]{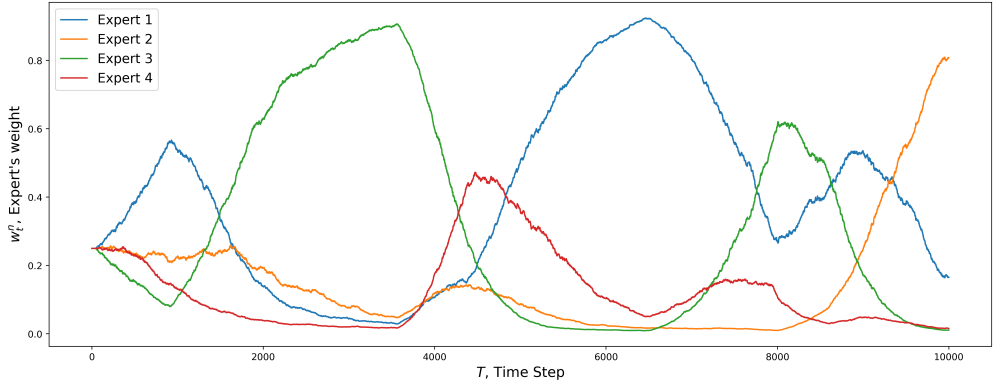}
    \caption{Evolution of weights of \textbf{non-replicated} algorithm during the full game.}
    \label{figure:evolution-non-replicated-all}
\end{figure}

To conclude, it seems that the replicated algorithms outperform non-replicated ones on the stochastic-like data. It would be interesting to obtain some concrete \textbf{empirical condition} on adversarial data under which the non-replicated algorithms perform better than their replicated analogues. This problem serves as the challenge for our further research.





\section{Conclusion}
In the article we developed the general hedging algorithm $\mathcal{G}$ (based on classical Hedge) for the delayed feedback experts' weight allocation (see Section \ref{sec-algorithm}, Algorithm \ref{algorithm-main}). The developed algorithm is applicable both to hedging countable and continuous sets of experts. Thanks to our main result (Theorem \ref{theorem-main-loss-bound-D}), we can bound its loss or regret with respect to the switching sequence of experts.

We described two examples of applications of algorithm $\mathcal{G}$ for delayed feedback setting. Algorithm \ref{algorithm-delayed-hedge} ($\mathcal{G}_{\text{base}}$, Subsection \ref{hedge-delayed}) is an extension of the classical Hedge for the delayed feedback. Algorithm \ref{algorithm-delayed-fixed-share} ($\mathcal{G}_{\text{fs}}$, Subsection \ref{sec-fs}) is the adaptation of classical Fixed Share. Both algorithms are non-replicated, which means that they use all the observed data to make the decision (in contrast to existing meta-approaches to delayed feedback setting).

It seems that the general probabilistic model which we described can be enhanced even more. First of all, it is reasonable to consider dynamic time-dependent learning rates $\eta_{t}$ for different time steps $t$.\footnote{The usual choice of dynamic learning rate in the non-delayed setting is $\eta_{t}\propto \frac{1}{\sqrt{t}}$.} This may rid the learner from choosing the learning rate beforehand. Secondly, it is possible to consider different observation probabilities \eqref{observation-probability} (or potential, see \cite{cesa-bianchi}). The different choice may allow to obtain the generalized versions and loss bound of Theorem \ref{theorem-main-loss-bound-D} for many other algorithms based on multiplicative weights (e.g. MW2 \cite{cesa2007improved}). The described statements serve as the challenge for our further research.

\section*{Acknowledgements}
\noindent The research was partially supported by the Russian Foundation for Basic Research grant 16-29-09649 ofi m.

\bibliographystyle{elsarticle-num-names}


\clearpage
\appendix

\section{Math Tools}
\label{sec-appendix-math}

In this appendix we describe the math tools that we use in out article. We start with the well-known Hoeffding's \& Pinsker's inequalities and then state and prove the important Lemmas (used in the proof of our main Theorem \ref{theorem-main-loss-bound-D}).

\vspace{1mm}

\noindent\textbf{Hoeffding's inequality.} {\emph Let $X\in[a,b]\subset \mathbb{R}$ be a random variable. Then,
\begin{equation}\ln \mathbb{E}e^{sX}\leq s\mathbb{E}X+s^{2}\frac{(b-a)^2}{8}
\label{hoeffding}
\end{equation}
for all $s\in\mathbb{R}$.}

\vspace{2mm}

\noindent\textbf{Pinsker's inequality.} Let $p(x)$ and $q(x)$ be probabilities (or densities) of $x\in X$ for two discrete (continuous) distributions over discrete (continuous) set $X\subset \mathbb{R}^{N}$. Then
\begin{equation}
\max_{X'\subset X}|p(X')-q(X')|\leq \sqrt{\frac{1}{2}KL(p||q)},
\label{pinsker}
\end{equation}
where $KL(p||q)$ is Kullback–Leibler divergence between $p$ and $q$.

\vspace{2mm}

The following technical Lemma plays an important role in the proof of Theorem \ref{theorem-main-loss-bound-D} (Section \ref{sec-proofs}).
\begin{lemma}
\label{lemma-change-bound}
Let $X\subset \mathbb{R}^{N}$ be a countable (or continuous) set. Let $p(x)$ and $q(x)$ denote probabilities (or densities) of two random variables with values in $X$. Let $a:X\rightarrow \mathbb{R}$ be a measurable function such that for all $x\in X$ we have $-\frac{1}{\eta}\ln a(x)\in [0, C]$. Then if $q(x)\propto p(x)\cdot a(x)$, the following holds true\footnote{In the continuous case the sum should be replaced by the integral.}
\begin{equation}
\sum_{x: p(x)\geq q(x)}[p(x)-q(x)]\leq \frac{\eta C}{4}.
\label{change-bound}
\end{equation}
\end{lemma}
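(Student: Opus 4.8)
The plan is to recognize the left-hand side of \eqref{change-bound} as the total variation distance between $p$ and $q$, and then to bound it by combining Pinsker's inequality \eqref{pinsker} with Hoeffding's inequality \eqref{hoeffding}. Indeed, for any set $A$ we have $p(A)-q(A)=\sum_{x\in A}[p(x)-q(x)]$, and this is maximized by taking $A=\{x:\,p(x)\ge q(x)\}$; hence the quantity to be bounded is exactly $\max_{X'\subset X}|p(X')-q(X')|$, which by \eqref{pinsker} is at most $\sqrt{\tfrac{1}{2}KL(p||q)}$. So it suffices to prove $KL(p||q)\le \eta^{2}C^{2}/8$: substituting this into Pinsker's bound gives $\sqrt{\tfrac{1}{2}\cdot \eta^{2}C^{2}/8}=\eta C/4$, precisely the constant claimed.

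To estimate the divergence I would first write $a(x)=e^{-\eta c(x)}$ with $c(x)=-\tfrac{1}{\eta}\ln a(x)\in[0,C]$, and introduce the normalizing constant $Z=\sum_{x}p(x)a(x)=\mathbb{E}_{p}\big[e^{-\eta c}\big]$, so that $q(x)=p(x)a(x)/Z$. A direct expansion then yields
\[
KL(p||q)=\sum_{x}p(x)\ln\frac{p(x)}{q(x)}=\sum_{x}p(x)\ln\frac{Z}{a(x)}=\ln Z+\eta\,\mathbb{E}_{p}[c],
\]
so the whole problem is reduced to controlling $\ln Z$ in terms of $\eta$, $C$, and $\mathbb{E}_{p}[c]$.

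The key step is to bound $\ln Z=\ln\mathbb{E}_{p}\big[e^{-\eta c}\big]$ by applying Hoeffding's inequality \eqref{hoeffding} to the random variable $X=-c\in[-C,0]$ with parameter $s=\eta$, which gives $\ln Z\le -\eta\,\mathbb{E}_{p}[c]+\eta^{2}C^{2}/8$. Adding this to the identity for $KL(p||q)$ above, the two $\eta\,\mathbb{E}_{p}[c]$ terms cancel and we obtain $KL(p||q)\le \eta^{2}C^{2}/8$, finishing the proof via Pinsker; the continuous case is identical with sums replaced by integrals. The argument is short, so the only genuine subtlety — and the place where the constant $\tfrac{1}{4}$ is forced — is bookkeeping: one must orient the divergence as $KL(p||q)$ rather than $KL(q||p)$ (since $p$ is the reference measure against which $q$ is reweighted), and then check that the factor $\tfrac{1}{8}$ from Hoeffding combines with the $\tfrac{1}{2}$ from Pinsker to give exactly $\tfrac{1}{4}$. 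I expect no obstacle beyond verifying these constants line up.
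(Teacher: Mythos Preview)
Your proposal is correct and follows essentially the same route as the paper: reduce the left-hand side to the total variation distance, apply Pinsker's inequality \eqref{pinsker}, express $KL(p\|q)=\ln \mathbb{E}_{p}[e^{-\eta c}] + \eta\,\mathbb{E}_{p}[c]$, and bound it by $\eta^{2}C^{2}/8$ via Hoeffding's inequality \eqref{hoeffding}. Your write-up is in fact slightly cleaner than the paper's, which contains a harmless sign slip in the intermediate line \eqref{mix-gap-in-pinsker}.
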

\begin{proof}
Apply Pinsker's inequality \ref{pinsker} for $p(\cdot)$ and $q(x)$ and obtain
\begin{equation}
\sum_{x: p(x)\geq q(x)}[p(x)-q(x)]\leq \sqrt{\frac{1}{2}KL(p||q)}.
\label{pinsker-to-lemma}
\end{equation}
Note that $q(x)=\frac{p(x)a(x)}{\sum_{x'\in X}p(x')a(x')}$. We compute the divergence
\begin{eqnarray}
\nonumber
KL(p||q)=\sum_{x\in X}p(x)\ln \frac{p(x)}{q(x)}=
\sum_{x\in X}p(x)\ln \frac{\sum_{x'\in X}p(x')a(x')}{a(x)}=
\\
\ln \sum_{x\in X}p(x)a(x) -\sum_{x\in X}p(x)\ln a(x)=
\nonumber
\\
\eta \bigg[\sum_{x\in X}p(x)l^{x}-\frac{1}{\eta}\ln \sum_{x\in X}p(x)e^{-\eta l^{x}}\bigg]\leq
\eta \cdot \frac{\eta C^{2}}{8}=\frac{\eta^{2}C^{2}}{8},
\label{mix-gap-in-pinsker}
\end{eqnarray}
where in \eqref{mix-gap-in-pinsker} we denote $l^{x}=-\frac{1}{\eta}\ln a(x)\in [0, C]$ (for $x\in X$) and use Hoeffding's inequality \eqref{hoeffding} for variable which is equal to $l^{x}$ w.p. $p(x)$. To finish, we obtain the bound \eqref{change-bound} by combining \eqref{pinsker-to-lemma} with the upper bound \eqref{mix-gap-in-pinsker}.
\end{proof}

\begin{lemma} Let $T>0$ be an integer and $\{D_{t}\}_{t=1}^{T}$ be the sequence of integer delays such that $t+D_{t}\leq T$. Let $\mathcal{D}_{t}=\{\tau|\tau+D_{\tau}\leq t\}$. Then
\begin{equation}
\sum_{t=1}^{T-1}|\mathcal{D}_{t}|+\sum_{t=1}^{T-1}D_{t}=\frac{T(T-1)}{2}.
\label{delay-sum-eq}
\end{equation}\label{lemma-delay-sum}
\begin{proof}Note that all $\mathcal{D}_{\tau}$ for $\tau\geq t+D_{t}$ contain $t$. Thus, 
$$\sum_{t=1}^{T}|\mathcal{D}_{t}|=\sum_{t=1}^{T}\big[T+1-(t+D_{t})\big].$$
Since $|\mathcal{D}_{T}|=T$ and $D_{T}=0,$, the obtained expression is equivalent to desired equality \eqref{delay-sum-eq}.
\end{proof}
\end{lemma}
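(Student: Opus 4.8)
The plan is to evaluate the \emph{total} sum $\sum_{t=1}^{T}|\mathcal{D}_{t}|$ by a double-counting argument and then strip off the boundary ($t=T$) contributions to recover the sums over $1,\dots,T-1$. First I would write $|\mathcal{D}_{t}|=\sum_{\tau=1}^{T}\mathbb{I}_{[\tau+D_{\tau}\leq t]}$ and interchange the order of summation. For a fixed $\tau$, the index $\tau$ lies in $\mathcal{D}_{t}$ precisely when $t\geq \tau+D_{\tau}$, so it is counted for exactly the values $t\in\{\tau+D_{\tau},\dots,T\}$, i.e. for $T+1-(\tau+D_{\tau})$ steps (a non-negative count, since $\tau+D_{\tau}\leq T$ by the standing assumption). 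This reproduces the identity from the author's sketch,
\[
\sum_{t=1}^{T}|\mathcal{D}_{t}|=\sum_{t=1}^{T}\big[T+1-(t+D_{t})\big].
\]

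Next I would evaluate the right-hand arithmetic sum directly: $\sum_{t=1}^{T}(T+1)-\sum_{t=1}^{T}t-\sum_{t=1}^{T}D_{t}=T(T+1)-\tfrac{T(T+1)}{2}-\sum_{t=1}^{T}D_{t}=\tfrac{T(T+1)}{2}-\sum_{t=1}^{T}D_{t}$. Rearranging gives the clean ``full-range'' identity $\sum_{t=1}^{T}|\mathcal{D}_{t}|+\sum_{t=1}^{T}D_{t}=\tfrac{T(T+1)}{2}$, which is exactly the statement of the lemma but summed over $1,\dots,T$ instead of $1,\dots,T-1$.

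The final step is to account for the $t=T$ terms on each side. Because $\tau+D_{\tau}\leq T$ holds for every $\tau$, all losses are revealed by the end of step $T$, so $\mathcal{D}_{T}=\{1,\dots,T\}$ and $|\mathcal{D}_{T}|=T$; moreover the same assumption forces $D_{T}=0$. Subtracting these two boundary contributions from the full-range identity yields $\sum_{t=1}^{T-1}|\mathcal{D}_{t}|+\sum_{t=1}^{T-1}D_{t}=\tfrac{T(T+1)}{2}-T-0=\tfrac{T(T-1)}{2}$, which is the desired equality \eqref{delay-sum-eq}. I do not anticipate a genuine obstacle: the lemma is an exact counting identity, and the only points requiring care are the correct interchange of summation order (equivalently, the observation that each $\tau$ is ``active'' in precisely $T+1-(\tau+D_{\tau})$ of the sets $\mathcal{D}_{t}$) and the off-by-one bookkeeping at the boundary, where $|\mathcal{D}_{T}|=T$ and $D_{T}=0$ together convert the symmetric full-range identity into the stated one over $1,\dots,T-1$.
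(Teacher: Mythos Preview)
Your proposal is correct and follows essentially the same approach as the paper: both obtain the full-range identity $\sum_{t=1}^{T}|\mathcal{D}_{t}|=\sum_{t=1}^{T}[T+1-(t+D_{t})]$ by the double-counting observation that each $\tau$ belongs to exactly $T+1-(\tau+D_{\tau})$ of the sets $\mathcal{D}_{t}$, and then use $|\mathcal{D}_{T}|=T$ and $D_{T}=0$ to pass to the sums over $1,\dots,T-1$. Your write-up simply makes explicit the arithmetic evaluation and the boundary subtraction that the paper leaves to the reader.
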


\begin{lemma}\label{fixed-share-prob-bound} Let ${N}_{T}$ be the sequence of experts $(n_{1},n_{2},\dots,n_{T})\in \mathcal{N}^{T}$, where $\mathcal{N}=\{1,2,\dots,N\}$. Let $p(\cdot)$ be the probabilistic model used in Fixed Share with prior $p_{0}\equiv \frac{1}{N}$ and switch probabilities $\alpha_{t}=\frac{1}{t}$ for all $t=2,\ldots,T$. Then 
$$-\ln p({N}_{T})\leq |d{N}_{T}+1|\cdot (\ln N+\ln T),$$
where $|d{N}_{T}|=|\{t:\,n_{t}\neq n_{t-1}\}|$ is the number of expert switches in $N_{T}$.
\begin{proof}Simple calculations
\begin{eqnarray}
-\ln p({N}_{T})=
\nonumber
\\
-\ln\frac{1}{N}-\sum_{t\in d{N}_{T}}\ln \frac{\alpha_{t}}{N}-\sum_{t\notin d{N}_{T}}\ln(1-\alpha_{t}+\frac{\alpha_{t}}{N})\leq
\nonumber
\\
-|d{N}_{T}+1|\cdot \ln\frac{1}{N}-\sum_{t\in d{N}_{T}}\ln \alpha_{t}-\sum_{t\notin d{N}_{T}}\ln(1-\alpha_{t})\leq\nonumber
\\
-|d{N}_{T}+1|\cdot \ln\frac{1}{N}-|d{N}_{T}|\cdot \ln \frac{1}{T}-\sum_{t=2}^{T}\ln\frac{t-1}{t}=
\nonumber
\\
|d{N}_{T}+1|\cdot (\ln N+\ln T)
\nonumber
\end{eqnarray}
prove the lemma.
\end{proof}
\end{lemma}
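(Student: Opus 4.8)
The plan is to compute $-\ln p(N_T)$ directly from the Markov factorization $p(N_T)=p_0(n_1)\prod_{t=2}^{T}p(n_t\mid n_{t-1})$ and then bound the three kinds of contributions separately. First I would substitute the Fixed Share prior $p_0\equiv\frac1N$ and the transition law \eqref{fixed-share-trans-prob}, splitting the product over $t=2,\dots,T$ according to whether $t$ is a \emph{switch} step ($t\in dN_T$, i.e.\ $n_t\neq n_{t-1}$) or a \emph{no-switch} step ($t\notin dN_T$, i.e.\ $n_t=n_{t-1}$). On a switch step the indicator vanishes, so the factor is exactly $p(n_t\mid n_{t-1})=\alpha_t/N$; on a no-switch step it equals $1-\alpha_t+\alpha_t/N$. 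Taking $-\ln$ then expresses $-\ln p(N_T)$ as the prior term $\ln N$ plus a sum over switch steps and a sum over no-switch steps, all of whose terms are nonnegative.

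Next I would bound each family. For the switch steps, $-\ln(\alpha_t/N)=\ln N+\ln t\le\ln N+\ln T$ since $\alpha_t=1/t$ and $t\le T$; as there are $|dN_T|$ of them, together with the prior these account for $(|dN_T|+1)\ln N$ and $|dN_T|\ln T$. For the no-switch steps I would discard the nonnegative summand $\alpha_t/N$ to obtain the clean lower bound $1-\alpha_t+\frac{\alpha_t}{N}\ge 1-\alpha_t$, so that $-\ln\!\big(1-\alpha_t+\tfrac{\alpha_t}{N}\big)\le-\ln(1-\alpha_t)=\ln\frac{t}{t-1}$; I must check that this inequality points in the upper-bounding direction, which it does because $-\ln$ is decreasing.

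The one genuinely slick step, which I expect to be the only real subtlety, is making the no-switch sum telescope. Each term $\ln\frac{t}{t-1}$ is nonnegative, so I would enlarge the index set from $t\notin dN_T$ to all $t=2,\dots,T$ without breaking the inequality; the resulting full product $\prod_{t=2}^{T}\frac{t}{t-1}$ collapses to $T$, giving $\sum_{t=2}^{T}\ln\frac{t}{t-1}=\ln T$. Collecting the pieces yields $(|dN_T|+1)\ln N+|dN_T|\ln T+\ln T=(|dN_T|+1)(\ln N+\ln T)$, which is exactly the claimed bound. Beyond this telescoping observation the argument is routine bookkeeping; the points to get right are the careful switch/no-switch split of the transition factors and ensuring every $-\ln$ bound is an upper bound.
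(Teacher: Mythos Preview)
Your proposal is correct and follows essentially the same route as the paper: the same Markov factorization, the same switch/no-switch split of the transition factors, the same bound $1-\alpha_t+\tfrac{\alpha_t}{N}\ge 1-\alpha_t$ on no-switch steps, and the same enlargement of the no-switch sum to all $t=2,\dots,T$ so that $\sum_{t=2}^{T}\ln\tfrac{t}{t-1}$ telescopes to $\ln T$. Your write-up is in fact more explicit about why each inequality goes in the right direction than the paper's terse chain of displays.
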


\end{document}